\theoremstyle{plain}
\newtheorem{theorem}{Theorem}[section]
\newtheorem{lemma}[theorem]{Lemma}
\theoremstyle{definition}
\newtheorem{definition}[theorem]{Definition}
\theoremstyle{remark}
\newcommand{\cmark}{\ding{51}}%
\newcommand{\xmark}{\ding{55}}%
\newcommand{\expt}{\mathop{\mathbb{E}}}
\newcommand{\bR}{\mathbb{R}}
\newcommand{\cA}{\mathcal{A}}
\newcommand{\cV}{\mathcal{V}}
\newcommand{\notp}{{\neg p}}
\newcommand{\neupl}{NeuPL\xspace}
\newcommand{\neupljpsro}{NeuPL-JPSRO\xspace}
\newcommand{\rps}{{\it rock-paper-scissors}\xspace}
\newcommand{\rws}{{\it running-with-scissors}\xspace}
\newcommand{\psro}{{\sc PSRO}\xspace}
\newcommand{\jpsro}{{\sc JPSRO}\xspace}
\newcommand{\eqdef}{\mathrel{\mathop:}=}
\DeclarePairedDelimiterX{\infdivx}[2]{\big[}{\big]}{%
  #1\;\delimsize|\delimsize|\;#2%
}
\newcommand{\kld}[2]{\ensuremath{D_{\textsc{KL}}\infdivx{#1}{#2}}\xspace}
\title[Neural Population Learning beyond Symmetric Zero-sum Games]{Neural Population Learning beyond Symmetric Zero-sum Games}
\author{Siqi Liu}
\affiliation{
  \institution{Google DeepMind and\\University College London}
  \city{London}
  \country{United Kingdom}}
\email{liusiqi@google.com}
\author{Luke Marris}
\affiliation{
  \institution{Google DeepMind and\\University College London}
  \city{London}
  \country{United Kingdom}}
\email{marris@google.com}
\author{Marc Lanctot}
\affiliation{
  \institution{Google DeepMind}
  \city{Montréal}
  \country{Canada}}
\email{lanctot@google.com}
\author{Georgios Piliouras}
\affiliation{
  \institution{Google DeepMind}
  \city{London}
  \country{United Kingdom}}
\email{gpil@google.com}
\author{Joel Z. Leibo}
\affiliation{
  \institution{Google DeepMind}
  \city{London}
  \country{United Kingdom}}
\email{jzl@google.com}
\author{Nicolas Heess}
\affiliation{
  \institution{Google DeepMind}
  \city{London}
  \country{United Kingdom}}
\email{heess@google.com}
\begin{abstract}
We study computationally efficient methods for finding equilibria in n-player general-sum games, specifically ones that afford complex visuomotor skills. We show how existing methods would struggle in this setting, either computationally or in theory. We then introduce \neupljpsro, a neural population learning algorithm that benefits from transfer learning of skills and converges to a Coarse Correlated Equilibrium (CCE) of the game. We show empirical convergence in a suite of OpenSpiel games, validated rigorously by exact game solvers. We then deploy \neupljpsro to complex domains, where our approach enables adaptive coordination in a MuJoCo control domain and skill transfer in capture-the-flag. Our work shows that equilibrium convergent population learning can be implemented at scale and in generality, paving the way towards solving real-world games between heterogeneous players with mixed motives.
\end{abstract}
\keywords{Game Theory; Deep Learning; Multiagent Reinforcement Learning; Coarse Correlated Equilibrium}
\newcommand{\BibTeX}{\rm B\kern-.05em{\sc i\kern-.025em b}\kern-.08em\TeX}
\begin{document}

\pagestyle{fancy}
\fancyhead{}

\maketitle

\section{Introduction}
\label{introduction}

Purely competitive, symmetric zero-sum games have proven to be popular testbeds for AI research since its early days \citep{samuel1967some, tesauro1995temporal, campbell2002deep, silver2017mastering, silver2018general, brown2018superhuman, vinyals2019grandmaster, perolat2022mastering}. Principled algorithms have been developed in this setting, with convergence guarantees to a Nash Equilibrium (NE, \cite{nash1951}) where players can be expected to win (or draw) against {\em any} opponent. One family of equilibrium convergent methods follows from Fictitious Play (FP, \cite{brown1951}) and Double Oracle (DO, \cite{mcmahan_planning_2003}). By learning a set of strategies each best-responding to a mixture over their predecessors, FP and DO converge to an NE even in cyclic games (e.g. \rps) where self-play would have made no progress. Policy-Space Response Oracle (\psro, \cite{lanctot2017unified}) extended similar guarantees to extensive-form (EF, \cite{kuhn1957efg}) games by constructing a normal-form (NF) metagame whose actions correspond to playing a deep reinforcement learning (RL) policy for an entire episode. Variations of this idea have led to competitive agents in games as complex as StarCraft \citep{vinyals2019grandmaster}, albeit at significant costs training and evaluating hundreds of independent deep RL agents.

While significant advances have been made in finding Nash Equilibria in symmetric zero-sum games, real-world interactions are often n-player general-sum --- between heterogeneous actors with mixed motives. Progress here has been more limited for a few reasons. Computationally, finding exact NE is intractable beyond two-player zero-sum games (i.e. PPAD-complete \citep{daskalakis2009complexity}). More importantly, NE describe an impoverished view of general-sum interactions as it forbids correlated action choices between players. This limitation is subtle but critical: consider a road junction, an NE can only suggest uncorrelated action choices for each driver when much improved outcomes could have been achieved by coordinating drivers with a trusted third-party (e.g. a traffic light). Similar general-sum interactions occur frequently in our society. Fair, mutually beneficial social norms often enable coordination and improve outcomes for all parties.

This observation motivated (Coarse) Correlated Equilibria ((C)CE, \cite{aumann1974subjectivity, moulin1978strategically}), an equilibrium solution concept that allows for coordinated actions between players, mediated by a correlation device that {\em rational}, {\em self-interested} players would find beneficial to follow (go on ``green'', wait on ``red''). (C)CE generalise NE, as they naturally reduce to NE if players are in pure competition and have no way to usefully coordinate. Unlike NE, (C)CE are computationally tractable too, as they can be formulated as a linear program (LP) even in the n-player general-sum setting. Algorithms that offer convergence to (C)CE have also received increased interest in recent years. Following similar iterative best-response arguments as PSRO, \cite{marris2021jpsroicml} proposed Joint PSRO (\jpsro), a population learning algorithm with convergence guarantees to a NF (C)CE in n-player general-sum EF games. Nevertheless, evidence of convergence to (C)CE has been limited to a few research games that can be solved analytically --- the costs of representing, training and evaluating a population of independent RL agents {\em for each player} quickly become intractable, especially in games that demand complex skills.

How can we bring game-theoretic algorithms such as (J)PSRO to real-world games in full generality and at scale? The central question here, we argue, is that of efficient policy representation. If human players can routinely combine and reuse different skills to develop new strategies in games such as Chess, Go and Poker, could artificial agents reuse fundamental skills such as locomotion, perception and memory across strategies too? Whereas such skills must be learned repeatedly at each iteration in (J)PSRO using independent RL, Neural Population Learning (\neupl, \cite{liu2022neupl, liu2022simplex}) transfers and refines such skills across all policies within the population. Conditioned on opponent priors, the same neural network represents diverse best-response policies and implements \psro in symmetric zero-sum games. This approach offers several advantages. The shared, probabilistic representation of opponent behaviours promotes skill transfer between strategies, allows for online adaptation under different opponent priors and reduces the computational costs of training a population of policies to be comparable to that of self-play. Despite strong empirical results in several test domains, \neupl remains limited in important ways. First, \neupl is restricted to symmetric zero-sum games, limiting its generality in practical applications. Second, the convergence guarantees of population learning algorithms with shared strategy representation requires further clarification, as we shall explain in this work. Relatedly, \neupl fell short of demonstrating empirical convergence to a NE in research games where convergence can be verified empirically.

We address these limitations with \neupljpsro, a scalable, equilibrium convergent algorithm for n-player general-sum games. We clarify the convergence guarantees of population learning algorithms with shared representation, before motivating fundamental departures from \neupl that ensure convergence in general games. Empirically, we show that \neupljpsro converges to a CCE in several OpenSpiel games at a rate of convergence comparable to exact \jpsro where solutions can be evaluated exactly using analytical game solvers. Lastly, we show that unlike \jpsro, \neupljpsro can be deployed efficiently in complex domains, demonstrating adaptive, coordinated control in MuJoCo control domains and transfer learning of skills in the team strategy game of capture-the-flag that requires spatiotemporal reasoning from partial, visual observations. 

Additional experimental details and proofs are available in appendices to the extended version of this paper \citep{liu_motor_2021}.

\section{Preliminaries}

We now formally describe CCE \citep{moulin1978strategically} and JPSRO \citep{marris2021jpsroicml} before introducing our method \neupljpsro.

\subsection{Coarse Correlated Equilibrium (CCE)}
\label{sec:cce}

Normal-form (NF) games are the simplest game formulation where each player $p$ plays one of its actions $a_p \in \{ a_p^0, a_p^1, \dots\} = \cA_p$ simultaneously and receives a payoff $G_p: \cA \rightarrow \bR$ as a function of the joint action $a = (a_1, \dots, a_n)$, with $a \in \otimes_p \cA_p = \cA$ and $n$ the number of players. For player $p$, we denote $a = (a_p, a_\notp)$ with $a_\notp = (\dots, a_{p-1}, a_{p+1}, \dots)$, the actions played by all players except $p$. 
Let $\sigma(a) = \sigma(a_p, a_{\notp})$ denote the probability of players playing the joint action $a$ and $\sigma$ a probability distribution over the space of joint actions. A pure strategy is an action distribution that is deterministic and a mixed strategy is one that can be stochastic. The value for player $p$ under a mixed joint strategy $\sigma$ is defined as $\expt_{a \sim \sigma}[G_p(a)] = \expt_{a \sim \sigma}[G_p(a_p, a_\notp)]$. Let $\sigma_\notp(a_\notp) = \sum_{a_p} \sigma(a_p,a_\notp)$ be a marginal distribution over all players' action choices other than that of player $p$. Let $\Delta_\notp$ be the probability simplex of all such distributions. We similarly define marginals $\sigma(a_p) = \sum_{\notp} \sigma(a_p, a_\notp)$.

Let $\lfloor x \rfloor_{+} = \max(x, 0)$. The maximum incentive for $p$ to deviate from $\sigma$ by choosing action $a'_p$ is:
\begin{equation} \label{eq:delta_p}
    \delta_p(\sigma) = \bigl \lfloor \max_{a'_p \in \cA_p} \big( \expt_{a \sim \sigma}[G_p(a'_p, a_\notp) - G_p(a)] \big) \bigr \rfloor_{+}.
\end{equation}
$\sigma$ is an $\epsilon$-CCE if $\forall p, \delta_p(\sigma) \le \epsilon$.
A common metric for quantifying the approximation of $\sigma$ to a CCE is the CCE gap: $\delta(\sigma) = \sum_p \delta_p(\sigma) \ge 0$. A CCE gap is 0 if and only if $\sigma$ is a CCE. The value to a player under a CCE is its CCE value. CCE generalises NE in that every NE is a CCE but only CCE that factorise into marginals $\sigma(a) = \prod_p \sigma(a_p)$ are also NEs. Consequently, CCE always exist in finite games \citep{nash1951}.

NF CCE can be applied to EF games and we describe a specific construction that allows us to do so. In an EF game, player $p$ follows a policy $\pi_p(\cdot | s)$ that maps a state $s$ to a distribution over actions. Analogous to players taking actions in an NF game, we can construct an NF metagame where playing an action $a_p \in \cA_p$ executes a policy $\pi_p \in \Pi_p$ in the EF game for player $p$. Such a metagame would be large as it contains all enumerated policies as actions. We refer to the joint policy of all players $\pi = (\pi_1, \dots, \pi_n)$ as the joint policy and $\pi_\notp = (\dots, \pi_{p-1}, \pi_{p+1}, \dots)$ as the co-player joint policy for player $p$. In this metagame, the definition of an NF CCE applies. The $\max_{a'_p \in \cA_p}$ operator, from Equation~\ref{eq:delta_p}, amounts to $\max_{\pi'_p \in \Pi_p}$ or finding a policy $\pi'_p$ that maximises player $p$'s expected payoff to a co-player mixed-strategy $\sigma_\notp$. The policy that does so is a best-response (BR) to the co-player mixed-strategy $\sigma_\notp$. We study such NF metagames for the rest of this paper.

\subsection{Joint Policy-Space Response Oracle (JPSRO)}

The action space of an NF metagame can be intractable to enumerate. Nevertheless, we can study {\em restricted} NF metagames, whose action spaces are subsets of those of the full game. 
JPSRO \citep{marris2021jpsroicml} implements such an algorithm that converges to an NF CCE of an EF game (Algorithm~\ref{alg:jpsro}). 
Starting from an initial {\em restricted} metagame with a set of starting policies for each player $\Pi^0 = \{ \pi^0_p \}^n_{p=1}$, JPSRO computes a BR policy to co-player joint policies $\Pi^{t-1}_\notp$, sampled according to the marginal CCE $\sigma^{t-1}_\notp$ and adds it to player $p$'s set of policies at iteration $t$. This amounts to adding a metagame action for each player, leading to an expanded metagame at the next iteration. The expected payoff (EP) $G^t$ is re-evaluated at each iteration for all joint policies and $\sigma^t$ is re-computed using a CCE meta-strategy solver (MSS). $\sigma^t$ is referred to as a CCE of the {\em restricted} game, or a CCE mixed-strategy at iteration $t$.
This process terminates when a BR operator cannot find policies $\pi^t_p, \forall p$ that yield an $\epsilon$ improvement in expected payoff with $\max_p \delta^t_p = \bigl \lfloor \expt_{\pi \sim \sigma^{t-1}} [G_p(\pi^t_p, \pi_\notp) - G_p(\pi)] \bigr \rfloor_{+} < \epsilon$, resulting in an NF $\epsilon$-CCE of the full EF game. 

In short, instead of enumerating policies which is intractable for most NF metagames, JPSRO uses a BR operator to implement the maximisation step from Equation~\ref{eq:delta_p} and constructs a sequence of {\em restricted} metagames whose CCE converge to a CCE of the full game. The BR operator can be implemented using analytical solvers if available (as in \cite{marris2021jpsroicml}) or approximate methods such as RL. We refer to the former case as {\em exact} \jpsro and the latter {\em approximate}.

\begin{algorithm}
\centering
\caption{\jpsro(CCE) \citep{marris2021jpsroicml}}\label{alg:jpsro}
\begin{algorithmic}[1]
\State $\Pi^0_1, \dots, \Pi^0_n \eqdef \{\pi^0_1\}, \dots, \{\pi^0_n\}$
\State $G^0 \gets \textsc{EP}(\Pi^0)$
\State $\sigma^0 \gets \textsc{MSS}(G^0)$
\For{$t \in [1, \dots]$}
    \For{$p \in [1, \dots, n]$}
        \State $\pi^t_p, \delta^t_p \gets \textsc{BR}(\Pi^{t-1}_\notp, \sigma^{t-1}_\notp)$
        \State $\Pi^t_p \gets \Pi^{t-1}_p \cup \{\pi^t_p\}$
    \EndFor
    \If{$\max_p \delta^t_p < \epsilon$}  \Comment{$\epsilon$-CCE.}
        \State \Return $(\Pi^{t-1}, \sigma^{t-1})$
    \EndIf
    \State $G^t \gets \textsc{EP}(\Pi^t)$ \Comment{payoffs.}
    \State $\sigma^t \gets \textsc{MSS}(G^t)$ \Comment{CCE solver.}
\EndFor
\end{algorithmic}
\end{algorithm}

\section{\neupljpsro}

We now describe \neupljpsro (Algorithm~\ref{alg:neupljpsro}), an algorithm that builds on \jpsro but scales up to complex domains using function approximation and deep RL. The key idea behind \neupljpsro is to parameterise each policy for each player with a strategy embedding vector $\nu_p^t \in \mathbb{R}^{d}$, resulting in player-specific strategy embedding vectors $\cV = \{ \cV_1, \dots, \cV_n \}$ where $\cV_p = \{ \nu^0_p, \nu^1_p, \dots \}$ represents strategies available to player $p$. Each strategy embedding vector parameterises a policy $\Pi_\theta(\cdot | s, \nu)$, using a conditional neural network with parameters $\theta$ shared over all players' strategies. Strategy embedding vectors $\nu^t_p$ are randomly initialised and optimised jointly with the rest of the network parameters $\theta$. For conciseness, $\Pi^\cV_\theta$ denotes all sets of policies for all players. We omit subscript or superscript on strategy embeddings when referring to all players or all strategies of a player. Strategy embeddings are optimised end-to-end in the same way as the network parameters $\theta$. We use $\pi'(\cdot | s) \gets \pi(\cdot | s)$ to refer to policy $\pi'$ updating its action distribution to that of $\pi$ in state $s$. This can be implemented exactly in the tabular case, and approximately by minimising KL-divergence $\kld{\pi(\cdot | s)}{\pi'(\cdot | s)}$ when using function approximation. 

While \neupljpsro closely follows \jpsro, a key issue arises with this integrated approach to policy representation --- changes to one policy now affect behaviours of others. This could be beneficial in the form of skill transfer, as shown in \cite{liu2022neupl} (and the rest of this work), but may also affect the convergence guarantees of the algorithm. We discuss the convergence properties of population learning algorithms with shared representation in Section~\ref{sec:convergence} before explaining how \neupljpsro in theory guarantees convergence to a CCE with a continual learning approach. Section~\ref{sec:scaling} examines how function approximation can be used in each of the key operations of \neupljpsro to scale up to large games. In particular, best-response learning can benefit from skill transfer and expected payoff (EP) evaluation can build on the learned strategy embeddings. Our approach here generalises \neupl which is specialised to strategy representation in symmetric zero-sum games. The scaling approaches described here are also in contrast to exact \jpsro that does not generalise to large games, or approximate \jpsro, which quickly becomes computationally prohibitive at scale.

\subsection{Convergence to Equilibria}
\label{sec:convergence}

\begin{figure}
    \centering
    \includegraphics[width=0.9\columnwidth]{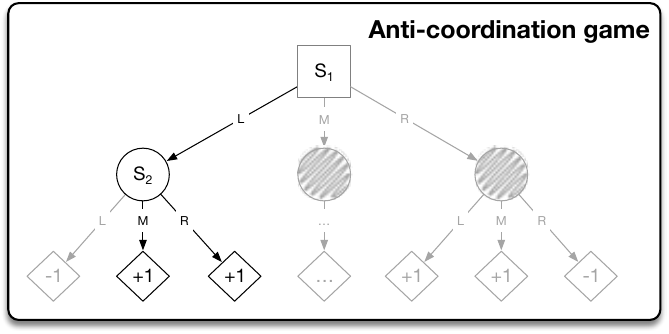}
    \caption{A turn-based two-player zero-sum game where player 1 publicly chooses a direction that player 2 is rewarded for avoiding. Terminal nodes show the payoffs of player 2.}
    \label{fig:unknown_unknown}
\end{figure}

A key element of the convergence arguments of \neupl \citep{liu2022neupl, liu2022simplex} relies on the stationarity of co-player policies at each best-responding iteration $\tau$. Given stationary co-player policies, it follows that a max-entropy RL algorithm would converge uniquely and the resulting policy would stand as a stationary co-player policy for all iterations $t > \tau$ that follow.
We show that the use of a max-entropy RL algorithm does {\em not} uniquely identify a best-response policy with a counter-example (Figure~\ref{fig:unknown_unknown}), unless additional assumptions are made on the dynamics of the game and the co-players. Specifically, we argue that max-entropy RL only results in stationary action distributions in {\em states it visits when best-responding}, which is a subset of all states where it may need to act as a co-player. This means that during best-response learning at iteration $t$, co-player policies developed at earlier iterations $\tau < t$ may be non-stationary in some states, and their best-responses may in turn change. In other words, best-response learning for iterations $t > \tau$ may be optimising towards ``moving targets'', and may never converge.

To make this concrete, consider the toy example shown in Figure~\ref{fig:unknown_unknown}. In this game, player 1 publicly declares a direction and player 2 is rewarded for avoiding it. Suppose player 1 plays a deterministic policy that always chooses ``L'' in state $S_1$ to which player 2 best-responds, a sample-based maximum-entropy BR operator would assign equal probability to ``M'' and ``R'' in state $S_2$, converging to the maximum-entropy BR which is stationary in all states reachable under the deterministic player 1 policy. Nevertheless, player 2 may be forced to act in additional (dashed) states in subsequent iterations with a different player 1 strategy -- indeed, with function approximation and shared representation, player 2 would behave unpredictably in these states yet any behaviour in these states would still constitute a valid best-response to player 1's original strategy. This is problematic for convergence, as the best-response to player 2's policy may change arbitrarily and there is no guarantee that the population would expand over all finite policies in the limit. This is true unless the best-responding policy can reach all states under the co-player joint policies (Definition~\ref{def:full_support}). Under the full-support condition, \neupl policies will remain stationary in all states. Indeed, the two domains considered in \cite{liu2022neupl} likely satisfy this full-support condition: \rps is an NF game, and in \rws co-player policies are stochastic, under partial observability and entropy-maximising, resulting in full-support reach probabilities for all states. This need not be the case in general games. In fact, the problem is particularly salient in games where best-response strategies are often deterministic (e.g. {\em goofspiel}, also known as the game of pure-strategies \citep{ross1971goofspiel}). We evaluate \neupljpsro in such domains in our experiments in Section~\ref{sec:results} to show its robustness in converging to a CCE even in the absence of the full-support condition.

\begin{definition}[Full-Support]
\label{def:full_support}
All states can be reached with positive probability at each iteration under the co-player joint-policy.
\end{definition}

How does \neupljpsro ensure convergence to an equilibria while maintaining the computational efficiency of shared strategy representation? The key idea behind \neupljpsro is to take a continual learning approach that removes the need for the full-support condition entirely.

\begin{algorithm}
\centering
\caption{\neupljpsro (Ours)}\label{alg:neupljpsro}
\begin{algorithmic}[1]
\State $\cV_1, \dots, \cV_n = \{\nu^0_p\}, \dots, \{\nu^0_n\}$  \Comment{With $\Pi_\theta(\cdot | s, \nu)$.}
\State $G^0 \gets \textsc{EP}(\Pi^\cV_\theta)$ \State $\sigma^0 \gets \textsc{MSS}(G^0)$
\For{$t \in [1, \dots]$} \Comment{\neupljpsro iterations.}
    \State $\hat\theta \gets \theta, \hat\cV \gets \cV$ \Comment{Reference policy parameters.}
    \For{$p \in [1, \dots, n]$}
        \State $\pi^t_p, \delta^t_p \gets \textsc{BR}(\Pi_{\hat \theta}^{\hat\cV}, \sigma^{t-1}_\notp)$ \Comment{See Section~\ref{sec:scaling_br}.}
        \State $\forall s, \Pi_\theta( \cdot | s, \nu^t_p) \gets \pi^t_p(\cdot | s)$ \Comment{Distill.}
        \State $\forall s, \Pi_\theta(\cdot | s, \nu_p) \gets \Pi_{\hat\theta}(\cdot | s, \hat\nu_p)$ \Comment{Regularise.}
        \State $\cV_p \gets \cV_p \cup \{ \nu^t_p \}$
    \EndFor
    \If{$\max_p \delta^t_p < \epsilon$}
        \State \Return $(\Pi_{\hat \theta}^{\hat\cV}, \sigma^{t-1})$
    \EndIf
    \State $G^t \gets \textsc{EP}(\Pi^\cV_\theta)$ \Comment{See Section~\ref{sec:scaling_ep}.}
    \State $\sigma^t \gets \textsc{MSS}(G^t)$
\EndFor
\end{algorithmic}
\end{algorithm}

Instead of concurrently optimising at all iterations as in \neupl, learning in \neupljpsro proceeds iteratively --- at iteration $t$, the best-response policies for each player $\pi^t_p, \forall p \in [n]$ are optimised against stationary co-player policies (we describe how $\pi^t_p$ are computed efficiently next). To ensure co-player stationarity, we introduce a set of reference policies $\Pi_{\hat\theta}(\cdot | s, \hat\nu^{\tau}_p), \forall \tau < t, \forall p \in [n]$, whose parameters $\hat\theta$ and $\hat\cV$ are held fixed for the duration of one iteration. Within one iteration, the best-response policies $\pi^t_p, \forall p \in [n]$ being optimised are distilled into the neural population $\forall s, \Pi_\theta( \cdot | s, \nu^t_p) \gets \pi^t_p(\cdot | s)$. At the same time, the behaviours of all existing strategies are held stationary via regularisation $\forall s, \Pi_\theta(\cdot | s, \nu_p) \gets \Pi_{\hat\theta}(\cdot | s, \hat\nu_p)$ in all states that player $p$ may reach. We recall that both distillation and regularisation are implemented as a minimisation problem of the KL-divergence between two policies. Our approach here draws inspiration from \cite{schwarz2018progress} where a conditional model continuously compresses existing skills while incorporating new ones with a shared skill latent space. We can now formally state the convergence guarantees of \neupljpsro with a proof that trivially extends from the convergence arguments of \jpsro \citep{marris2021jpsroicml}.

\begin{theorem}[CCE Convergence] \label{thm:neupljpsro}
When using a CCE meta-strategy solver in \neupljpsro, and when distill and regularise operators are exact, the sequence of mixed-strategy converges to a normal-form CCE under the meta-strategy distribution.
\end{theorem}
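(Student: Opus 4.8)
The plan is to reduce the claim to the CCE convergence guarantee of \jpsro \citep{marris2021jpsroicml} by showing that, when distill and regularise are exact, the iterates of \neupljpsro (Algorithm~\ref{alg:neupljpsro}) coincide, metagame by metagame, with those of \jpsro (Algorithm~\ref{alg:jpsro}) run with the same initialisation, best-response oracle and CCE meta-strategy solver. As in \jpsro, I assume the \textsc{BR} and \textsc{MSS} operators are inherited unchanged (the theorem only singles out distill and regularise because those are the \neupljpsro-specific operators); with an exact \textsc{BR} the conclusion is an exact CCE, and more generally the $\epsilon$ in the termination test carries over verbatim.

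First I would pin down the effect of the exact operators within a single iteration $t$. Let $\hat\theta,\hat\cV$ be the reference parameters frozen at the start of iteration $t$, and recall that the regularise step $\forall s,\ \Pi_\theta(\cdot|s,\nu_p)\gets\Pi_{\hat\theta}(\cdot|s,\hat\nu_p)$ ranges over \emph{all} states a player may act in, not merely those visited while best-responding. Exactness then means both KL targets are met with equality everywhere, so after the update at iteration $t$ we have $\Pi_\theta(\cdot|s,\nu_p^\tau)=\Pi_{\hat\theta}(\cdot|s,\hat\nu_p^\tau)$ for all $\tau<t$, $p\in[n]$ and all $s$ — every previously added policy is reproduced unchanged despite the shared parameters — while the newly added embedding satisfies $\Pi_\theta(\cdot|s,\nu_p^t)=\pi_p^t(\cdot|s)$, where $\pi_p^t$ is the policy returned by $\textsc{BR}(\Pi_{\hat\theta}^{\hat\cV},\sigma_\notp^{t-1})$. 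This is exactly the stationarity that the full-support condition (Definition~\ref{def:full_support}) was needed to obtain in \neupl: here co-player stationarity is enforced by construction rather than by reachability, which is the only substantive point that distinguishes this argument from that of \cite{liu2022neupl}.

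Next I would run an induction on $t$. In the base case the two algorithms start from the same set of policies, hence the same payoff matrix $G^0$ and the same $\sigma^0=\textsc{MSS}(G^0)$. For the inductive step, suppose the populations agree through iteration $t-1$ once each embedding $\nu_p^\tau$ is identified with the extensive-form policy it induces; then $\sigma^{t-1}$, and in particular the marginals $\sigma_\notp^{t-1}$, agree. At iteration $t$ both algorithms therefore compute $\pi_p^t$ as a best response to the identical co-player population under the identical $\sigma_\notp^{t-1}$; by the previous paragraph \neupljpsro adds precisely that policy and perturbs no other, so the sets of joint policies agree; and since $\textsc{EP}$ and $\textsc{MSS}$ depend only on the set of joint policies (through their expected payoffs), $G^t$ and $\sigma^t$ agree as well. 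The termination test $\max_p\delta_p^t<\epsilon$ with $\delta_p^t=\lfloor\expt_{\pi\sim\sigma^{t-1}}[G_p(\pi_p^t,\pi_\notp)-G_p(\pi)]\rfloor_+$ — the metagame instance of Equation~\ref{eq:delta_p} — is the same quantity in both algorithms. Hence the sequences $(\Pi^t,\sigma^t)$ and $(\Pi_\theta^\cV,\sigma^t)$ are the same sequence.

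Finally I would invoke the \jpsro convergence theorem for a CCE meta-strategy solver \citep{marris2021jpsroicml}: in a finite extensive-form game the restricted metagame's action set is a subset of the finitely many pure policies, so the population cannot grow indefinitely while every iteration yields an $\epsilon$ improvement for some player; the algorithm therefore terminates in finitely many steps at a configuration where $\delta_p(\sigma^{t-1})\le\epsilon$ for all $p$, i.e.\ $\sigma^{t-1}$ is a normal-form $\epsilon$-CCE under the returned meta-strategy distribution (and an exact CCE as $\epsilon\to0$ with an exact best-response oracle). By the equivalence just established the same holds for \neupljpsro, proving Theorem~\ref{thm:neupljpsro}. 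The one step that genuinely needs care — and the main obstacle — is the first: justifying that ``exact regularise over all states'' truly decouples the shared network so that the frozen reference population is reproduced exactly, since it is precisely the failure of this property under function approximation and partial state coverage that Section~\ref{sec:convergence} identifies as the obstruction to convergence; once that is granted, the rest is a mechanical reduction to \jpsro.
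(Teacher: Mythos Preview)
Your proposal is correct and takes essentially the same approach as the paper: the paper's proof is the single sentence ``In this case, \neupljpsro is equivalent to \jpsro which is known to converge, as proved in \cite{marris2021jpsroicml},'' and your induction spelling out why exact distill/regularise yields iterate-by-iterate equivalence to \jpsro is precisely the content behind that sentence. Your version is considerably more detailed than the paper's, but the reduction is identical.
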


\begin{proof}
In this case, \neupljpsro is equivalent to \jpsro which is known to converge, as proved in \cite{marris2021jpsroicml}.
\end{proof}

Finally, we generalise the convergence arguments of \jpsro \citep{marris2021jpsroicml} which assumes that the BR operator returns deterministic policies, to also consider the case of stochastic policies as is often the case in policy-gradient RL methods. In fact, we show that by encouraging specific types of stochastic policies such as the ones that are entropy-maximising, we can guarantee that the population learning process terminates. We defer formal arguments to Appendix~\ref{app:cce_convergence} for completeness and use max-entropy RL algorithms for best-response solving in this work in practice.

\subsection{Scaling to large games}
\label{sec:scaling}

\begin{figure}
    \centering
    \includegraphics[width=0.8\columnwidth]{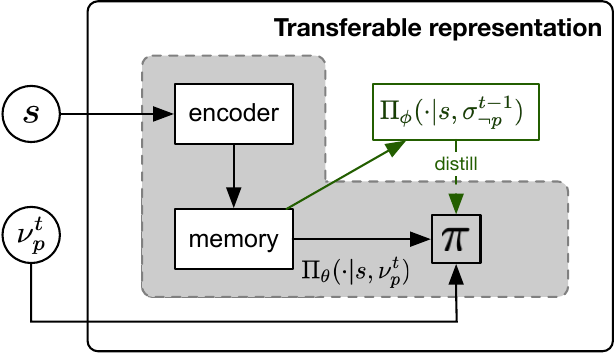}
    \caption{Efficient best-response solving by reusing transferable representation from the policy population $\Pi^\cV_\theta$. At iteration $t$, the policy head $\Pi_\phi$ (green) reuses the encoder and memory representation from $\Pi_\theta$ (gray) to learn a best-response to co-player mixed-strategy $\sigma^{t-1}_\notp$. The best-response policy is concurrently distilled into the neural population of policies $\Pi^\cV_\theta(\cdot | s, \nu^t_p)$ under the strategy embedding vector $\nu^t_p$.}
    \label{fig:scaling_br}
\end{figure}

\begin{figure*}
  \includegraphics[width=\textwidth]{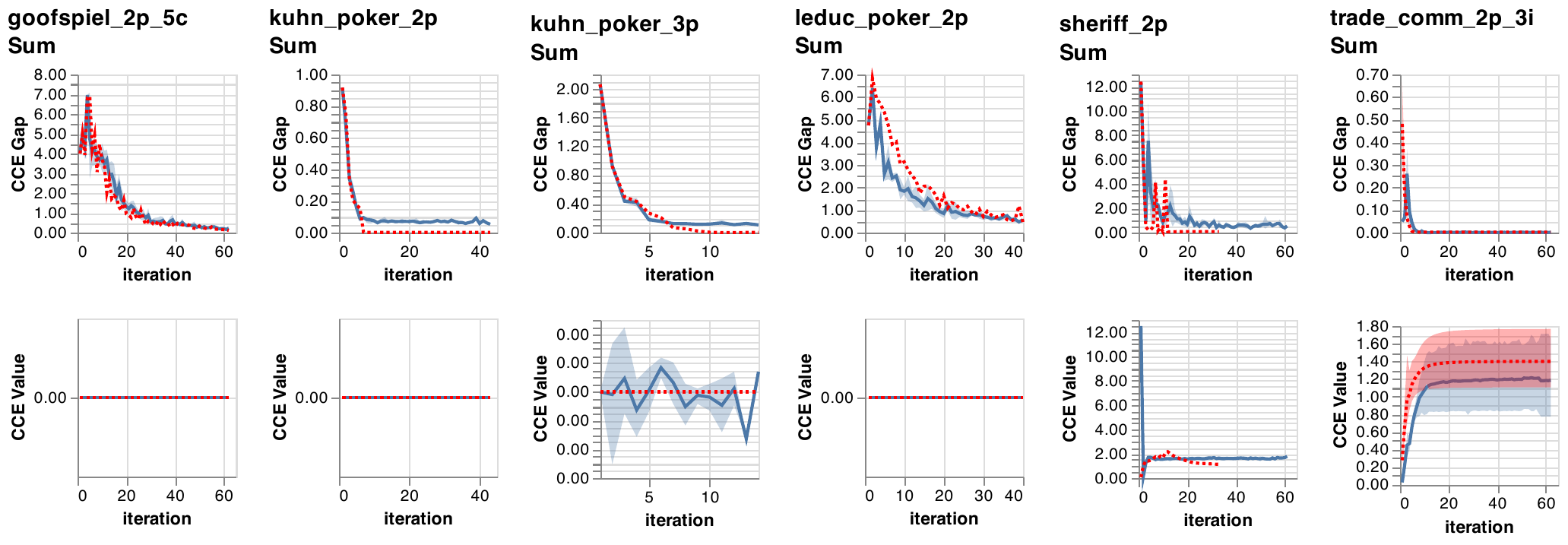}
  \caption{Exact CCE gaps and CCE values in 6 OpenSpiel games for \neupljpsro\xspace{\bf (Blue)} compared JPSRO {\bf (Red)} using exact best-response and expected payoff solvers averaged over 5 seeds.\label{fig:openspiel}}
\end{figure*}

\subsubsection{BR learning to CCE co-player mixed-strategies}
\label{sec:scaling_br}
At iteration $t$, \neupljpsro solves for a BR policy against co-player CCE mixed-strategy $\sigma^{t-1}_\notp$ for every player $p$. For games that cannot be solved analytically, approximate methods such as independent deep RL can be used. Nevertheless, doing so at every iteration would be impractical (as we show in Section~\ref{sec:ctf}). Learning from scratch when facing skilled co-players becomes challenging at later iterations, especially when basic skills (e.g. locomotion, perception and recurrent memory) are required for any meaningful exploration to occur.
Instead, we implement the BR operator using a policy network that benefits from transferable skills shared by the population of policies $\Pi^\cV_\theta$. Figure~\ref{fig:scaling_br} shows how an auxiliary policy head $\Pi_\phi(\cdot |s, \sigma^{t-1}_\notp)$ reuses the strategy-agnostic encoder and memory networks, and learns to best-respond to co-player mixed-strategies $\sigma^{t-1}_\notp$. Care needs to be taken when representing $\sigma^{t-1}_\notp$ in neural networks: the number of co-player joint strategies grows exponentially and a useful representation would reflect the player's probabilistic prior over co-players' joint strategies. We represent a co-player mixed-strategy $\sigma_\notp$ with a weighted representation $g(\cV, \sigma_\notp)$ as follows
\begin{equation}
    g(\cV, \sigma_\notp) = \sum_{a_\notp} \sigma_\notp(a_\notp) f( \dots,  \nu^{a_{p-1}}_{p-1}, \nu^{a_{p+1}}_{p+1}, \dots )
    \label{eq:weighted_coplayers}
\end{equation}
considering only {\sc top-k} co-player joint strategies $a_\notp$ under $\sigma_\notp$ for the summation operation. We describe further details in Appendix~\ref{app:representing_br} where we show that $k=96$ almost always leads to a lossless representation of co-player joint strategies in our experiments, as well as how player symmetry can be leveraged in the encoding function $f$ for further representation efficiency.

\subsubsection{BR learning to any co-player mixed-strategy}
The weighted co-player representation (Equation~\ref{eq:weighted_coplayers}) allows us to replicate results from \cite{liu2022simplex} too, where a conditional policy $\pi(\cdot | s, \sigma_\notp)$ responds Bayes-optimally to any co-player mixed-strategies $\sigma_\notp \in \Delta_\notp$. We demonstrate this potential for online adaption in Section~\ref{sec:cheetah_adapt} where a player collaborates with different co-players under uncertain prior beliefs. We make a simple modification to Algorithm~\ref{alg:neupljpsro} where we additionally sample arbitrary distributions over co-player strategies $\sigma_\notp \sim \Pr(\Delta_\notp)$ and optimise $\Pi_\phi(\cdot | s, \sigma_\notp)$ to best-respond accordingly. At convergence, $\Pi_\phi(\cdot | s, \sigma_\notp)$ behaves Bayes-optimally under any prior $\sigma_\notp \in \Delta_\notp$ over co-player joint-strategies.

\subsubsection{Expected payoff evaluation}
\label{sec:scaling_ep}
Payoff tensor $G^t$ needs to be evaluated at each iteration to update the metagame CCE mixed-strategy $\sigma^t \gets \textsc{MSS}(G^t)$. This can be costly: the number of joint strategies to evaluate grows exponentially in the number of players and estimating payoffs under each joint strategy may require many simulations in the absence of an analytical solver. 
We leverage learned strategy embedding vectors $\cV$ and continuously optimises a payoff estimator network $G(a) \gets \psi_w(\nu^{a_1}_1, \dots, \nu^{a_n}_n) \in \bR^n$ that predicts payoffs to each player under a joint strategy $a = (a_1, \dots, a_n)$. This network is used in all our experiments in lieu of the \textsc{EP} operator. Similar to \cite{liu2022neupl}, the payoff estimator network is trained to minimise the same loss function as the action-value function of the underlying RL agent. As $\psi_w$ is only conditioned on joint strategy embedding vectors and unaware of the state-action pairs, it is therefore regressing towards the expected returns for each player under a specific joint-strategy, with the expectation over the state and action distribution when players play out the specific joint-strategy $a$.  We describe how this payoff estimator network is implemented and optimised in Appendix~\ref{app:metagame_solving}. This payoff estimation network removes the need for evaluating payoff tensors through simulation, making payoff estimation at every \neupljpsro iteration practical and efficient.

\section{Results} \label{sec:results}

\begin{figure*}
    \centering
    \includegraphics[width=\textwidth]{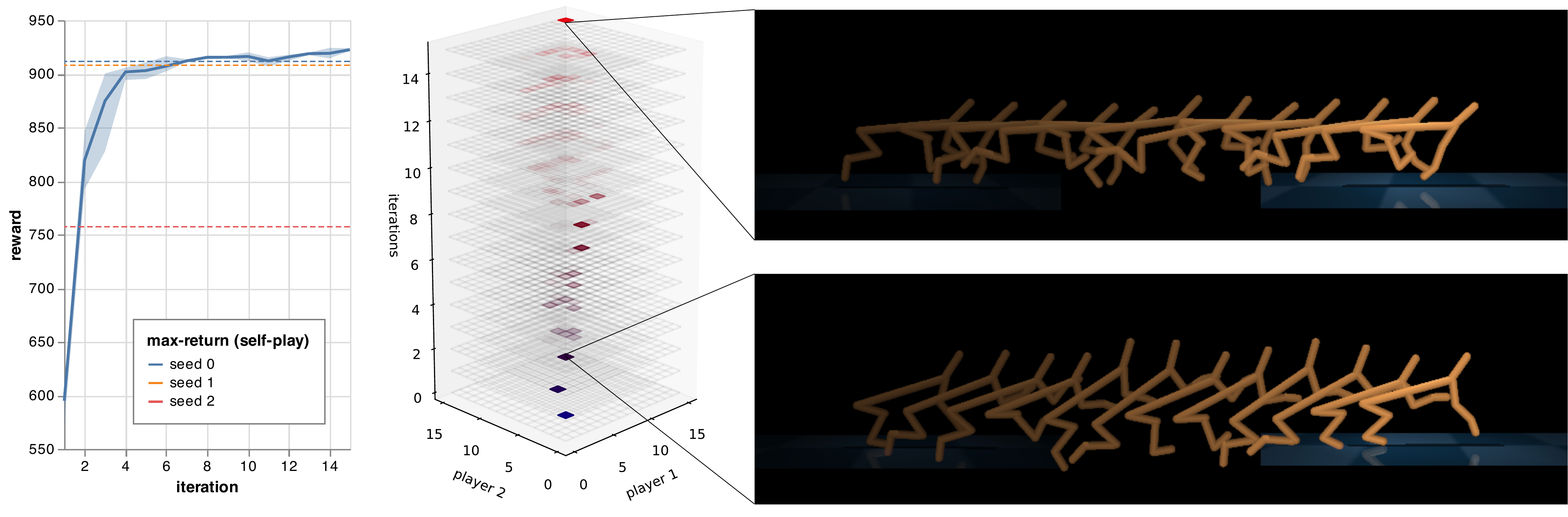}
    \caption{Emergence of cooperation in MuJoCo multiagent {\tt cheetah\_run}. {\bf (Left)} Expected returns achieved at each iteration (solid) compared to the maximum return obtained by independent trials where players optimise through self-play (dashed). The average return at iteration 16 is comparable to that of SoTA single-agent RL \citep{shahriari2022revisiting} {\bf (Middle)} The sequence of CCE best-responded to at each iteration for player 1 (rear leg) and player 2 (front leg). {\bf (Right)} Visualization of the learned behaviours at iteration 3 where the rear leg player raises the front leg player and at iteration 16 where both players cooperate competently.}
    \label{fig:cheetah}
\end{figure*}

We now present our empirical results that aim to answer two questions. First, we verify rigorously that \neupljpsro converges to a CCE in non-monotone strategy games for which exact CCE gaps and CCE values can be computed using analytical solvers. Second, we demonstrate its efficiency and generality by applying it to complex domains that involve realistic physics, partial observability and team-play. 
Additional technical details on compute, network architecture and game settings are provided in Appendix~\ref{app:results}.

\subsection{Convergence in n-player general-sum games}

\label{sec:openspiel}

To demonstrate empirical convergence to a CCE in n-player general-sum games, we investigated a diverse set of 6 strategy games from the OpenSpiel task suite \citep{LanctotEtAl2019OpenSpiel} as described in Appendix~\ref{app:full_openspiel}. Figure~\ref{fig:openspiel} summarises our results in each game reporting the sum of CCE gaps and CCE values across players when playing their equilibrium strategies of the {\em restricted} metagame at each iteration. 
For comparison, we show the same metrics for exact JPSRO in red, where entropy-maximising BRs are computed analytically at each iteration. 5 independent trials have been run for each game for each algorithms. For all games except {\tt trade\_comm}, the initial policy acts uniformly in all states\footnote{In {\tt trade\_comm}, learning a maximum-entropy BR to a uniform policy makes no progress: for a policy that ignores the message received, the BR would be agnostic to what message to send.}. The evaluation of CCE gaps and CCE values are exact in that the only input to the evaluation procedure from \neupljpsro are the sets of trained policies $\Pi^\cV_\theta$. The value of each policy, the optimal deviation actions as well as the CCE distributions are computed using analytical solvers.
For completeness, detailed results from each trial and for each player are shown in Appendix~\ref{app:full_openspiel}. For instance, inspecting per-player CCE values shows that \neupljpsro has recovered the last-mover advantage in poker. The sustained stability of the CCE values and CCE gaps at each \neupljpsro iteration suggest empirical co-player stationarity, necessary under our convergence arguments.

We make the following remarks. First, we observe empirical convergence towards a CCE in all games with both methods. In some games, \neupljpsro discovered and represented up to 64 policies for each player, far exceeding the size of the population reported in prior works \citep{liu2022neupl}. This demonstrates the potential of \neupljpsro to converge in games with long strategy cycles such as {\tt goofspiel}. 
Second, neither \neupljpsro nor JPSRO converged to CCE with specific properties. In particular, the values of the CCE do not converge to the values of the maximum-welfare CCE in non-zero-sum games ({\tt sheriff} and {\tt trade\_comm})\footnote{Our results for the JPSRO baseline differ from what have been reported in \cite{marris2021jpsroicml}. This is because the default analytical BR solver in OpenSpiel chooses the first action deterministically among indifferent ones. We used a maximum-entropy solver instead, which removes this implicit coordination bias.}. Equilibrium selection \citep{barman2015finding} remains an open question: every CCE describes a rational, stable state of the system in that no one has an incentive to deviate from their equilibrium strategy but only certain equilibria are socially valuable or fair. \neupljpsro converges to one such equilibrium, but to which remains unclear. 

Additionally, we note that in some games, \neupljpsro, an approximate method, has observed faster convergence to a CCE in some games than exact \jpsro in early iterations (e.g. Leduc Poker in the first 20 iterations). This maybe counter-intuitive at first, but we shall explain why this could be the case in practice through an example. Consider the game of \rps and suppose we start off the population with a deterministic always-rock policy. An exact best-response to this initial policy would be always-paper and an approximate one (perhaps due to entropy maximisation) may randomise but with a strong preference for paper (e.g. 80\%-paper, 10\% rock and 10\% scissors). In each scenario, the CCE between the first two strategies amounts to always playing the latest strategy --- always-paper is played in the former, and a mostly-paper strategy is played in the latter. It is therefore unsurprising that the exact case has a higher CCE gap as always-paper is more exploitable (to always-scissors) than its approximate, randomising counterpart. Indeed, in population learning algorithms such as \psro, the rate of convergence is not necessarily faster when the best-response solver is exact. The empirical rate of convergence to an equilibrium depends on the dynamics of the game and the choice of initial policy as well. Nevertheless, we show \neupljpsro to converge empirically, at a rate comparable to exact \jpsro.

\begin{figure*}
    \centering
    \includegraphics[width=\textwidth]{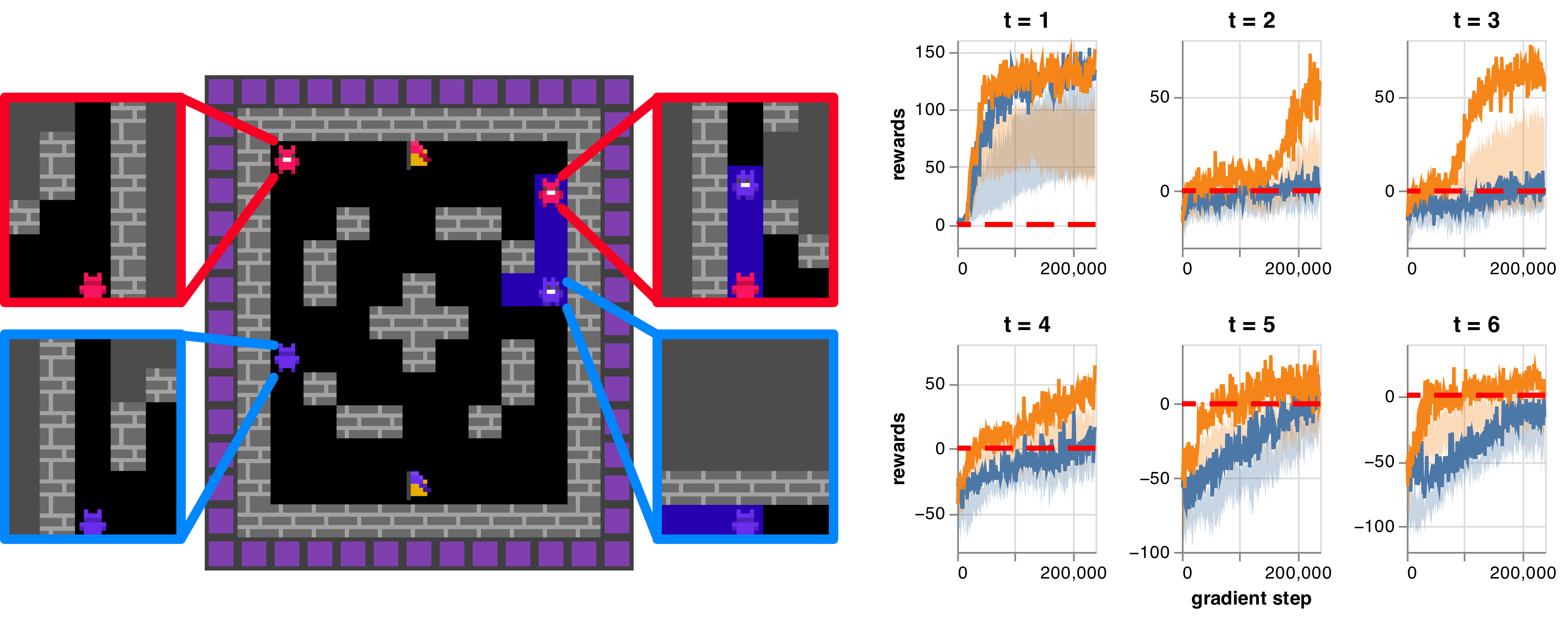}
    \caption{{\bf (Left)} A 4-player {\tt capture-the-flag} environment showing first-person views for each player. {\bf (Right)} Convergence to a CCE shown by the diminishing incentive to deviate to an independent BR across iterations. (Blue) Expected returns of independent RL exploiter policies optimised against the marginal CCE mixed-strategy $\sigma^t_\notp$ from \neupljpsro at iteration $t$. (Orange) Same as Blue, but initialised with pre-trained encoder and memory network parameters (as in Figure~\ref{fig:scaling_br}). (Red) CCE values at each \neupljpsro iteration. Solid lines show the optimistic {\em best-of-six} exploiter returns.}
    \label{fig:ctf}
\end{figure*}

\subsection{Online adaptation in multiagent MuJoCo domains}

\label{sec:cheetah_adapt}

Humans often adapt to each other in an online fashion, whether it's in competition, cooperation, or a mixture of both. Through iterative BR solving, population learning can generate diverse and strategically relevant behaviours. Prior work leveraged this property in a competitive setting \citep{liu2022simplex}, where a single conditional policy $\pi(\cdot | s, \sigma_\notp)$ is trained to best-respond to arbitrary metagame mixed-strategies $\sigma_\notp \in \Delta_\notp$. By interpreting $\sigma_\notp$ as a {\em prior} over co-player joint strategies, $\pi(\cdot | s, \sigma_\notp)$ trades off exploration and exploitation Bayes-optimally, maximising its expected returns. We study if \neupljpsro can lead to similar adaptive behaviours but in a cooperative setting. We construct a cooperative common-payoff physically-simulated game using {\tt cheetah\_run} \citep{tassa2020dm_control} where player 1 and player 2 control the rear and front legs respectively and receive the same forward velocity reward. Both players observe all joint positions of the cheetah embodiment but do not observe each other's joint actuation. To do well, players must build a model of the co-player at play and react accordingly.

Our goal is to verify that by letting players control parts of the same body, the population of policies $\Pi^\cV_\theta$ would develop diverse behaviour patterns, while the conditional BR policy $\Pi_\phi(\cdot | s, \sigma_\notp)$ learns to adapt to different partners in an online fashion.
Figure~\ref{fig:cheetah} (Left) shows our findings.  With \neupljpsro, the two players discovered a sequence of 16 strategies using the same conditional network, continuously improving the expected returns across iterations. The final averaged return outperforms self-play baselines with less variances across seeds. Figure~\ref{fig:cheetah} (Middle) visualised the sequence of CCE joint distributions discovered over 16 iterations. In earlier iterations, the player controlling the rear leg learned to raise its partner so as to minimise the disruption caused by an unskilled co-player (shown in Figure~\ref{fig:cheetah} (Bottom-Right)). This strategy became dominant in earlier iterations as it maximises returns for the pair. In subsequent iterations, the front leg player discovered policies that can cooperate with the rear leg player, leading to coordinated joint policies as shown in Figure~\ref{fig:cheetah} (Top-Right). We show this emergence of partnership visually over the first 4 iterations\footnote{cheetah-run: \url{https://youtu.be/-bBR6Vtu0sI}}.
More interestingly, by conditioning the final BR policy $\Pi_\phi(\cdot | s, \bar{\sigma}_\notp)$ with a uniform co-player prior $\bar{\sigma}_\notp$, we observed that the same policy is capable of probing its front leg partner through interaction and adapt accordingly online\footnote{cheetah-adapt: \url{https://youtu.be/zMwhWgafAK4}.}. Following initial feedback, the rear leg player either took control of the full body if its partner appears uncooperative or worked effectively with a front leg partner that proved competent. 
Our findings are consistent with \cite{liu2022simplex}, who considered agents adapting to multiple opponents in a competitive setting. We show here that similar adaptive behaviors can emerge in a cooperative setting with asymmetric roles using \neupljpsro.

\subsection{Strategic team-play in capture-the-flag}
\label{sec:ctf}

\neupljpsro efficiently scales up to games requiring transferable skills while converging to an approximate CCE. In this section, we investigate {\tt capture-the-flag}, a 4-player game where players compete in teams to capture the opponents' flag and bring it back to the home base under partial observability. 

A global view of the environment as well as players' first-person observation of the game are shown in Figure~\ref{fig:ctf} (Left). Compared to the standard environment described in \cite{leibo2021meltingpot}, we modified the visibility mechanism such that brick walls would restrict players' view (shown in grey). This puts further emphasis on players' abilities to infer others' strategies under partial visibility. 
We used the default rewards, which are sparse: players on the same team receive a reward of +10 (-10) upon capturing (conceding) a flag and zero otherwise. Players on the same team do not have explicit communication channels: they can observe floor paint left by co-players, current status of flag capture and the behaviours of other players to form implicit conventions. The opponent's flag is captured when returned to one's home if and only if their own flag remains at home too. Players may paint the floor and they cannot move if the tile they stand on has been painted over by an opponent. A player is removed if they are tagged twice within a short time and they recover more quickly if standing on tiles painted in their own colour.
Despite the complexity of the game, \neupljpsro successfully discovered a sequence of 8 strategies, using a single conditional network. 
It is challenging to offer a succinct view of the learning dynamics considering the size of the payoff tensor, but Appendix~\ref{app:ctf_results} shows an overall trend of progress: joint strategies developed at later iterations tend to be less exploitable, dominating earlier combinations of opponent strategies. 

We visualise these strategies\footnote{capture-the-flag: \url{https://youtu.be/z5EeMfcOo7A}. Each player's first-person view is annotated in white with the iteration of the strategy at play. The cumulative return to each team is shown at the top.}. While this environment is a simplified variant of the classic 3D game studied in \cite{jaderberg2019human}, we observe similar behaviour patterns to emerge here as well. Throughout the first few iterations, players incrementally learned to implement coordinated strategies such as ``Home Base Defence'', ``Opponent Base Camping'' and finally ``Teammate Following''. Other skills such as timing the tagging cool-down mechanism have emerged, too.

It is difficult to verify convergence to a CCE in a game of this complexity. Nevertheless, we offer empirical evidence of convergence to a CCE by independently training best-responding RL policies against marginal CCE mixed-strategies of the restricted metagames at each iteration $\{ \sigma^t \}^T_{t=0}$, using standard RL algorithms. This is similar to the exact CCE gap evaluation in Section~\ref{sec:openspiel} but with an approximate BR solver. Figure~\ref{fig:ctf} (Right) shows our findings. Compared to the CCE value to player $p$ when playing according to the joint CCE distribution $\sigma^t$ (Red dashed), a standard RL algorithm managed to find policies with better returns in the first 5 iterations (Orange and Blue). These iterations have not converged to a CCE of the game as profitable deviations can be made by unilaterally switching to these independently optimised BR policies. At iteration 6, independent RL training can no longer outperform. This shows that we are in close proximity of a CCE and little room for improvement remains compared to what player $p$'s equilibrium strategy already implements. We note that we report the {\em maximum} expected returns observed over 6 independently trained BR policies. This is because our goal is to certify if {\em any} profitable deviation actions can be found. 
We report the average case in Appendix~\ref{app:ctf_results} where independent RL struggled to outperform the CCE equilibrium strategies as soon as iteration 5. 
Our exploiter results suggest that RL policies optimised via self-play would {\em at best} match the CCE solution of \neupljpsro. We note that the CCE value to each player at every iteration must be zero, as the game is zero-sum. 

An orthogonal benefit of \neupljpsro lies in its potential in transfer learning across players and strategies. This is particularly attractive in domains such as ours that afford transferable skills. 
To do well, players must learn to represent visual observation history so as to infer other players' strategies under partial observability. In close combat, players must understand the cool-down mechanism of the tagging action and proactively retreat to a safe distance in a stand off. 
Figure~\ref{fig:ctf} (Right) confirms our hypothesis, showing that a randomly initialised RL policy struggled to best-respond to strong opponents (Blue) when a policy partly initialised with pre-trained encoder and memory network parameters succeeded (Orange). This highlights the importance of transfer learning in complex games --- strong opponents tend to create difficult exploration problems for randomly initialised RL policies. In this instance, approximate JPSRO would have led to the incorrect conclusion that no further improvement can be made after 3 iterations, forfeiting strategically interesting joint-strategies of this game. \neupljpsro naturally promotes a progressive learning curriculum and benefit from transfer learning of strategy-agnostic skills.

\section{Related Work}

A rich body of literature have focused on providing convergence guarantees to equilibria in games of different degrees of generalities.  Two-player zero-sum games, in particular, attracted attention given the tractability and interchangeability of NE in this setting. Families of NE-convergent methods have been developed, with recent successes in scaling to complex EF games of real-world interests \citep{brown1951, mcmahan_planning_2003, lanctot2017unified, perolat2021poincare}.
Progress beyond two-player zero-sum games trails behind in comparison. Principled no-regret learning methods \citep{cesa2006prediction, daskalakis2021near} have been proposed, converging to the generalised solution concept of (C)CE \citep{aumann1974subjectivity, moulin1978strategically}. Nevertheless, there has been limited success in deploying these methods to in many practical applications due to their computational challenges in scaling to large games.

Of particular relevance to our work are population learning methods that leverage latest advances in function approximation and deep RL. Several works proposed to learn a population of deep RL agents and observed emergent complex behaviours through population-based interactions \citep{bansal2018emergent, jaderberg2019human, liu2022motor, liu2019emergent, vinyals2019grandmaster}. While these works demonstrated what could be achieved by multiagent RL at scale, their convergence characteristics remain to be understood \citep{garnelo2021pick}. In contrast, \cite{lanctot2017unified, mcaleer_pipeline_2021, smith2020iterative, liu2022neupl} proposed methods that leverage deep RL while retaining the game-theoretic convergence guarantees. Fewer works in this category ventured beyond two-player zero-sum games. \cite{Muller2020A} built on the scalable meta-strategy solver of $\alpha$-rank \citep{omidshafiei2019alpha}. \cite{marris2021jpsroicml} extended PSRO to the n-player general-sum case with theoretical convergence guarantees to CCE but does not scale up to domains with complex action and observation spaces.

\section{Limitations}

A common limitation of equilibrium convergent population learning algorithms is the lack of guarantee on which equilibrium the population would converge to. This applies to our work too (Section~\ref{sec:openspiel}). While we can expect convergence to an equilibrium, we cannot predict if the solution would be {\em desirable} (e.g. that it is social-welfare maximising). In competitive games, equilibrium selection is less critical as NEs are interchangeable. This is in contrast to general-sum games where players might find themselves significantly less well off in some equilibria than others. Computationally, while \neupljpsro addressed many of the bottlenecks of prior methods, it remains challenging to scale up to {\em many} players each with {\em many} strategies due to the size of the payoff tensor. A future direction would be to consider sample-based equilibria solutions, without needing to tabulate the entire payoff tensor upfront.

\section{Conclusions}

We proposed \neupljpsro as an efficient and scalable algorithm to solving n-player general-sum EF games that provably converges to a NF CCE. We verified its convergence empirically, across diverse test domains ranging from research strategy games to games that require deep RL. We showed that \neupljpsro can adapt to diverse co-players in non-zero-sum settings and demonstrated the importance of transfer learning in solving games with transferable skills. Our method is computationally accessible, paving the way for deploying game-theoretic solutions to real-world general games.

\bibliographystyle{ACM-Reference-Format} 
\bibliography{main}

%%% -*-BibTeX-*-
%%% Do NOT edit. File created by BibTeX with style
%%% ACM-Reference-Format-Journals [18-Jan-2012].

\begin{thebibliography}{48}

%%% ====================================================================
%%% NOTE TO THE USER: you can override these defaults by providing
%%% customized versions of any of these macros before the \bibliography
%%% command.  Each of them MUST provide its own final punctuation,
%%% except for \shownote{}, \showDOI{}, and \showURL{}.  The latter two
%%% do not use final punctuation, in order to avoid confusing it with
%%% the Web address.
%%%
%%% To suppress output of a particular field, define its macro to expand
%%% to an empty string, or better, \unskip, like this:
%%%
%%% \newcommand{\showDOI}[1]{\unskip}   % LaTeX syntax
%%%
%%% \def \showDOI #1{\unskip}           % plain TeX syntax
%%%
%%% ====================================================================

\ifx \showCODEN    \undefined \def \showCODEN     #1{\unskip}     \fi
\ifx \showDOI      \undefined \def \showDOI       #1{#1}\fi
\ifx \showISBNx    \undefined \def \showISBNx     #1{\unskip}     \fi
\ifx \showISBNxiii \undefined \def \showISBNxiii  #1{\unskip}     \fi
\ifx \showISSN     \undefined \def \showISSN      #1{\unskip}     \fi
\ifx \showLCCN     \undefined \def \showLCCN      #1{\unskip}     \fi
\ifx \shownote     \undefined \def \shownote      #1{#1}          \fi
\ifx \showarticletitle \undefined \def \showarticletitle #1{#1}   \fi
\ifx \showURL      \undefined \def \showURL       {\relax}        \fi
% The following commands are used for tagged output and should be
% invisible to TeX
\providecommand\bibfield[2]{#2}
\providecommand\bibinfo[2]{#2}
\providecommand\natexlab[1]{#1}
\providecommand\showeprint[2][]{arXiv:#2}

\bibitem[\protect\citeauthoryear{Abdolmaleki, Springenberg, Tassa, Munos,
  Heess, and Riedmiller}{Abdolmaleki et~al\mbox{.}}{2018}]%
        {abdolmaleki2018maximum}
\bibfield{author}{\bibinfo{person}{Abbas Abdolmaleki},
  \bibinfo{person}{Jost~Tobias Springenberg}, \bibinfo{person}{Yuval Tassa},
  \bibinfo{person}{Remi Munos}, \bibinfo{person}{Nicolas Heess}, {and}
  \bibinfo{person}{Martin Riedmiller}.} \bibinfo{year}{2018}\natexlab{}.
\newblock \showarticletitle{Maximum a Posteriori Policy Optimisation}. In
  \bibinfo{booktitle}{\emph{International Conference on Learning
  Representations}}.
\newblock
\urldef\tempurl%
\url{https://openreview.net/forum?id=S1ANxQW0b}
\showURL{%
\tempurl}


\bibitem[\protect\citeauthoryear{Aumann}{Aumann}{1974}]%
        {aumann1974subjectivity}
\bibfield{author}{\bibinfo{person}{Robert~J Aumann}.}
  \bibinfo{year}{1974}\natexlab{}.
\newblock \showarticletitle{Subjectivity and correlation in randomized
  strategies}.
\newblock \bibinfo{journal}{\emph{Journal of mathematical Economics}}
  \bibinfo{volume}{1}, \bibinfo{number}{1} (\bibinfo{year}{1974}),
  \bibinfo{pages}{67--96}.
\newblock


\bibitem[\protect\citeauthoryear{Bansal, Pachocki, Sidor, Sutskever, and
  Mordatch}{Bansal et~al\mbox{.}}{2018}]%
        {bansal2018emergent}
\bibfield{author}{\bibinfo{person}{Trapit Bansal}, \bibinfo{person}{Jakub
  Pachocki}, \bibinfo{person}{Szymon Sidor}, \bibinfo{person}{Ilya Sutskever},
  {and} \bibinfo{person}{Igor Mordatch}.} \bibinfo{year}{2018}\natexlab{}.
\newblock \showarticletitle{Emergent Complexity via Multi-Agent Competition}.
  In \bibinfo{booktitle}{\emph{International Conference on Learning
  Representations}}.
\newblock
\urldef\tempurl%
\url{https://openreview.net/forum?id=Sy0GnUxCb}
\showURL{%
\tempurl}


\bibitem[\protect\citeauthoryear{Barman and Ligett}{Barman and Ligett}{2015}]%
        {barman2015finding}
\bibfield{author}{\bibinfo{person}{Siddharth Barman} {and}
  \bibinfo{person}{Katrina Ligett}.} \bibinfo{year}{2015}\natexlab{}.
\newblock \showarticletitle{Finding any nontrivial coarse correlated
  equilibrium is hard}.
\newblock \bibinfo{journal}{\emph{ACM SIGecom Exchanges}} \bibinfo{volume}{14},
  \bibinfo{number}{1} (\bibinfo{year}{2015}), \bibinfo{pages}{76--79}.
\newblock


\bibitem[\protect\citeauthoryear{Brown}{Brown}{1951}]%
        {brown1951}
\bibfield{author}{\bibinfo{person}{George~W. Brown}.}
  \bibinfo{year}{1951}\natexlab{}.
\newblock \showarticletitle{Iterative solution of games by fictitious play.}
\newblock \bibinfo{journal}{\emph{Activity Analysis of Production and
  Allocation}} (\bibinfo{year}{1951}).
\newblock
\showeprint{13(1):374–376}~[cs.LG]


\bibitem[\protect\citeauthoryear{Brown and Sandholm}{Brown and
  Sandholm}{2018}]%
        {brown2018superhuman}
\bibfield{author}{\bibinfo{person}{Noam Brown} {and} \bibinfo{person}{Tuomas
  Sandholm}.} \bibinfo{year}{2018}\natexlab{}.
\newblock \showarticletitle{Superhuman AI for heads-up no-limit poker: Libratus
  beats top professionals}.
\newblock \bibinfo{journal}{\emph{Science}} \bibinfo{volume}{359},
  \bibinfo{number}{6374} (\bibinfo{year}{2018}), \bibinfo{pages}{418--424}.
\newblock


\bibitem[\protect\citeauthoryear{Campbell, Hoane~Jr, and Hsu}{Campbell
  et~al\mbox{.}}{2002}]%
        {campbell2002deep}
\bibfield{author}{\bibinfo{person}{Murray Campbell}, \bibinfo{person}{A~Joseph
  Hoane~Jr}, {and} \bibinfo{person}{Feng-hsiung Hsu}.}
  \bibinfo{year}{2002}\natexlab{}.
\newblock \showarticletitle{Deep blue}.
\newblock \bibinfo{journal}{\emph{Artificial intelligence}}
  \bibinfo{volume}{134}, \bibinfo{number}{1-2} (\bibinfo{year}{2002}),
  \bibinfo{pages}{57--83}.
\newblock


\bibitem[\protect\citeauthoryear{Cesa-Bianchi and Lugosi}{Cesa-Bianchi and
  Lugosi}{2006}]%
        {cesa2006prediction}
\bibfield{author}{\bibinfo{person}{Nicolo Cesa-Bianchi} {and}
  \bibinfo{person}{G{\'a}bor Lugosi}.} \bibinfo{year}{2006}\natexlab{}.
\newblock \bibinfo{booktitle}{\emph{Prediction, learning, and games}}.
\newblock \bibinfo{publisher}{Cambridge university press}.
\newblock


\bibitem[\protect\citeauthoryear{Daskalakis, Fishelson, and
  Golowich}{Daskalakis et~al\mbox{.}}{2021}]%
        {daskalakis2021near}
\bibfield{author}{\bibinfo{person}{Constantinos Daskalakis},
  \bibinfo{person}{Maxwell Fishelson}, {and} \bibinfo{person}{Noah Golowich}.}
  \bibinfo{year}{2021}\natexlab{}.
\newblock \showarticletitle{Near-optimal no-regret learning in general games}.
\newblock \bibinfo{journal}{\emph{Advances in Neural Information Processing
  Systems}}  \bibinfo{volume}{34} (\bibinfo{year}{2021}),
  \bibinfo{pages}{27604--27616}.
\newblock


\bibitem[\protect\citeauthoryear{Daskalakis, Goldberg, and
  Papadimitriou}{Daskalakis et~al\mbox{.}}{2009}]%
        {daskalakis2009complexity}
\bibfield{author}{\bibinfo{person}{Constantinos Daskalakis},
  \bibinfo{person}{Paul~W Goldberg}, {and} \bibinfo{person}{Christos~H
  Papadimitriou}.} \bibinfo{year}{2009}\natexlab{}.
\newblock \showarticletitle{The complexity of computing a Nash equilibrium}.
\newblock \bibinfo{journal}{\emph{SIAM J. Comput.}} \bibinfo{volume}{39},
  \bibinfo{number}{1} (\bibinfo{year}{2009}), \bibinfo{pages}{195--259}.
\newblock


\bibitem[\protect\citeauthoryear{Farina, Ling, Fang, and Sandholm}{Farina
  et~al\mbox{.}}{2019}]%
        {farina2019correlation}
\bibfield{author}{\bibinfo{person}{Gabriele Farina}, \bibinfo{person}{Chun~Kai
  Ling}, \bibinfo{person}{Fei Fang}, {and} \bibinfo{person}{Tuomas Sandholm}.}
  \bibinfo{year}{2019}\natexlab{}.
\newblock \showarticletitle{Correlation in extensive-form games: Saddle-point
  formulation and benchmarks}.
\newblock \bibinfo{journal}{\emph{Advances in Neural Information Processing
  Systems}}  \bibinfo{volume}{32} (\bibinfo{year}{2019}).
\newblock


\bibitem[\protect\citeauthoryear{Garnelo, Czarnecki, Liu, Tirumala, Oh, Gidel,
  van Hasselt, and Balduzzi}{Garnelo et~al\mbox{.}}{2021}]%
        {garnelo2021pick}
\bibfield{author}{\bibinfo{person}{Marta Garnelo},
  \bibinfo{person}{Wojciech~Marian Czarnecki}, \bibinfo{person}{Siqi Liu},
  \bibinfo{person}{Dhruva Tirumala}, \bibinfo{person}{Junhyuk Oh},
  \bibinfo{person}{Gauthier Gidel}, \bibinfo{person}{Hado van Hasselt}, {and}
  \bibinfo{person}{David Balduzzi}.} \bibinfo{year}{2021}\natexlab{}.
\newblock \showarticletitle{Pick Your Battles: Interaction Graphs as
  Population-Level Objectives for Strategic Diversity}. In
  \bibinfo{booktitle}{\emph{AAMAS}}.
\newblock


\bibitem[\protect\citeauthoryear{Jaderberg, Czarnecki, Dunning, Marris, Lever,
  Castaneda, Beattie, Rabinowitz, Morcos, Ruderman, et~al\mbox{.}}{Jaderberg
  et~al\mbox{.}}{2019}]%
        {jaderberg2019human}
\bibfield{author}{\bibinfo{person}{Max Jaderberg}, \bibinfo{person}{Wojciech~M
  Czarnecki}, \bibinfo{person}{Iain Dunning}, \bibinfo{person}{Luke Marris},
  \bibinfo{person}{Guy Lever}, \bibinfo{person}{Antonio~Garcia Castaneda},
  \bibinfo{person}{Charles Beattie}, \bibinfo{person}{Neil~C Rabinowitz},
  \bibinfo{person}{Ari~S Morcos}, \bibinfo{person}{Avraham Ruderman},
  {et~al\mbox{.}}} \bibinfo{year}{2019}\natexlab{}.
\newblock \showarticletitle{Human-level performance in 3D multiplayer games
  with population-based reinforcement learning}.
\newblock \bibinfo{journal}{\emph{Science}} \bibinfo{volume}{364},
  \bibinfo{number}{6443} (\bibinfo{year}{2019}), \bibinfo{pages}{859--865}.
\newblock


\bibitem[\protect\citeauthoryear{Kingma and Ba}{Kingma and Ba}{2015}]%
        {KingmaB14}
\bibfield{author}{\bibinfo{person}{Diederik~P. Kingma} {and}
  \bibinfo{person}{Jimmy Ba}.} \bibinfo{year}{2015}\natexlab{}.
\newblock \showarticletitle{Adam: {A} Method for Stochastic Optimization}. In
  \bibinfo{booktitle}{\emph{3rd International Conference on Learning
  Representations, {ICLR} 2015, San Diego, CA, USA, May 7-9, 2015, Conference
  Track Proceedings}}, \bibfield{editor}{\bibinfo{person}{Yoshua Bengio} {and}
  \bibinfo{person}{Yann LeCun}} (Eds.).
\newblock
\urldef\tempurl%
\url{http://arxiv.org/abs/1412.6980}
\showURL{%
\tempurl}


\bibitem[\protect\citeauthoryear{Kuhn}{Kuhn}{1950}]%
        {kuhn1950simplified}
\bibfield{author}{\bibinfo{person}{Harold~W Kuhn}.}
  \bibinfo{year}{1950}\natexlab{}.
\newblock \showarticletitle{A simplified two-person poker}.
\newblock \bibinfo{journal}{\emph{Contributions to the Theory of Games}}
  \bibinfo{volume}{1} (\bibinfo{year}{1950}), \bibinfo{pages}{97--103}.
\newblock


\bibitem[\protect\citeauthoryear{Kuhn and Tucker}{Kuhn and Tucker}{1957}]%
        {kuhn1957efg}
\bibfield{author}{\bibinfo{person}{H.~W. Kuhn} {and} \bibinfo{person}{AW
  Tucker}.} \bibinfo{year}{1957}\natexlab{}.
\newblock \showarticletitle{Extensive games and the problem and information}.
\newblock \bibinfo{journal}{\emph{Contributions to the Theory of Games, II,
  Annals of Mathematical Studies}}  \bibinfo{volume}{28}
  (\bibinfo{year}{1957}), \bibinfo{pages}{193–216}.
\newblock


\bibitem[\protect\citeauthoryear{Lanctot, Lockhart, Lespiau, Zambaldi,
  Upadhyay, P\'{e}rolat, Srinivasan, Timbers, Tuyls, Omidshafiei, Hennes,
  Morrill, Muller, Ewalds, Faulkner, Kram\'{a}r, Vylder, Saeta, Bradbury, Ding,
  Borgeaud, Lai, Schrittwieser, Anthony, Hughes, Danihelka, and
  Ryan-Davis}{Lanctot et~al\mbox{.}}{2019}]%
        {LanctotEtAl2019OpenSpiel}
\bibfield{author}{\bibinfo{person}{Marc Lanctot}, \bibinfo{person}{Edward
  Lockhart}, \bibinfo{person}{Jean-Baptiste Lespiau}, \bibinfo{person}{Vinicius
  Zambaldi}, \bibinfo{person}{Satyaki Upadhyay}, \bibinfo{person}{Julien
  P\'{e}rolat}, \bibinfo{person}{Sriram Srinivasan}, \bibinfo{person}{Finbarr
  Timbers}, \bibinfo{person}{Karl Tuyls}, \bibinfo{person}{Shayegan
  Omidshafiei}, \bibinfo{person}{Daniel Hennes}, \bibinfo{person}{Dustin
  Morrill}, \bibinfo{person}{Paul Muller}, \bibinfo{person}{Timo Ewalds},
  \bibinfo{person}{Ryan Faulkner}, \bibinfo{person}{J\'{a}nos Kram\'{a}r},
  \bibinfo{person}{Bart~De Vylder}, \bibinfo{person}{Brennan Saeta},
  \bibinfo{person}{James Bradbury}, \bibinfo{person}{David Ding},
  \bibinfo{person}{Sebastian Borgeaud}, \bibinfo{person}{Matthew Lai},
  \bibinfo{person}{Julian Schrittwieser}, \bibinfo{person}{Thomas Anthony},
  \bibinfo{person}{Edward Hughes}, \bibinfo{person}{Ivo Danihelka}, {and}
  \bibinfo{person}{Jonah Ryan-Davis}.} \bibinfo{year}{2019}\natexlab{}.
\newblock \showarticletitle{{OpenSpiel}: A Framework for Reinforcement Learning
  in Games}.
\newblock \bibinfo{journal}{\emph{CoRR}}  \bibinfo{volume}{abs/1908.09453}
  (\bibinfo{year}{2019}).
\newblock
\showeprint[arxiv]{1908.09453}~[cs.LG]
\urldef\tempurl%
\url{http://arxiv.org/abs/1908.09453}
\showURL{%
\tempurl}


\bibitem[\protect\citeauthoryear{Lanctot, Zambaldi, Gruslys, Lazaridou, Tuyls,
  Perolat, Silver, and Graepel}{Lanctot et~al\mbox{.}}{2017}]%
        {lanctot2017unified}
\bibfield{author}{\bibinfo{person}{Marc Lanctot}, \bibinfo{person}{Vinicius
  Zambaldi}, \bibinfo{person}{Audrunas Gruslys}, \bibinfo{person}{Angeliki
  Lazaridou}, \bibinfo{person}{Karl Tuyls}, \bibinfo{person}{Julien Perolat},
  \bibinfo{person}{David Silver}, {and} \bibinfo{person}{Thore Graepel}.}
  \bibinfo{year}{2017}\natexlab{}.
\newblock \showarticletitle{A Unified Game-Theoretic Approach to Multiagent
  Reinforcement Learning}. In \bibinfo{booktitle}{\emph{Advances in Neural
  Information Processing Systems}},
  \bibfield{editor}{\bibinfo{person}{I.~Guyon}, \bibinfo{person}{U.~V.
  Luxburg}, \bibinfo{person}{S.~Bengio}, \bibinfo{person}{H.~Wallach},
  \bibinfo{person}{R.~Fergus}, \bibinfo{person}{S.~Vishwanathan}, {and}
  \bibinfo{person}{R.~Garnett}} (Eds.), Vol.~\bibinfo{volume}{30}.
  \bibinfo{publisher}{Curran Associates, Inc.}
\newblock
\urldef\tempurl%
\url{https://proceedings.neurips.cc/paper/2017/file/3323fe11e9595c09af38fe67567a9394-Paper.pdf}
\showURL{%
\tempurl}


\bibitem[\protect\citeauthoryear{Leibo, Due{\~n}ez-Guzman, Vezhnevets, Agapiou,
  Sunehag, Koster, Matyas, Beattie, Mordatch, and Graepel}{Leibo
  et~al\mbox{.}}{2021}]%
        {leibo2021meltingpot}
\bibfield{author}{\bibinfo{person}{Joel~Z Leibo}, \bibinfo{person}{Edgar~A
  Due{\~n}ez-Guzman}, \bibinfo{person}{Alexander Vezhnevets},
  \bibinfo{person}{John~P Agapiou}, \bibinfo{person}{Peter Sunehag},
  \bibinfo{person}{Raphael Koster}, \bibinfo{person}{Jayd Matyas},
  \bibinfo{person}{Charlie Beattie}, \bibinfo{person}{Igor Mordatch}, {and}
  \bibinfo{person}{Thore Graepel}.} \bibinfo{year}{2021}\natexlab{}.
\newblock \showarticletitle{Scalable evaluation of multi-agent reinforcement
  learning with {M}elting {P}ot}. In \bibinfo{booktitle}{\emph{International
  Conference on Machine Learning}}. PMLR, \bibinfo{pages}{6187--6199}.
\newblock


\bibitem[\protect\citeauthoryear{Liu, Lanctot, Marris, and Heess}{Liu
  et~al\mbox{.}}{2022a}]%
        {liu2022simplex}
\bibfield{author}{\bibinfo{person}{Siqi Liu}, \bibinfo{person}{Marc Lanctot},
  \bibinfo{person}{Luke Marris}, {and} \bibinfo{person}{Nicolas Heess}.}
  \bibinfo{year}{2022}\natexlab{a}.
\newblock \showarticletitle{Simplex Neural Population Learning: Any-Mixture
  {B}ayes-Optimality in Symmetric Zero-sum Games}. In
  \bibinfo{booktitle}{\emph{Proceedings of the 39th International Conference on
  Machine Learning}} \emph{(\bibinfo{series}{Proceedings of Machine Learning
  Research}, Vol.~\bibinfo{volume}{162})},
  \bibfield{editor}{\bibinfo{person}{Kamalika Chaudhuri},
  \bibinfo{person}{Stefanie Jegelka}, \bibinfo{person}{Le~Song},
  \bibinfo{person}{Csaba Szepesvari}, \bibinfo{person}{Gang Niu}, {and}
  \bibinfo{person}{Sivan Sabato}} (Eds.). \bibinfo{publisher}{PMLR},
  \bibinfo{pages}{13793--13806}.
\newblock
\urldef\tempurl%
\url{https://proceedings.mlr.press/v162/liu22h.html}
\showURL{%
\tempurl}


\bibitem[\protect\citeauthoryear{Liu, Lever, Heess, Merel, Tunyasuvunakool, and
  Graepel}{Liu et~al\mbox{.}}{2019}]%
        {liu2019emergent}
\bibfield{author}{\bibinfo{person}{Siqi Liu}, \bibinfo{person}{Guy Lever},
  \bibinfo{person}{Nicholas Heess}, \bibinfo{person}{Josh Merel},
  \bibinfo{person}{Saran Tunyasuvunakool}, {and} \bibinfo{person}{Thore
  Graepel}.} \bibinfo{year}{2019}\natexlab{}.
\newblock \showarticletitle{Emergent Coordination Through Competition}. In
  \bibinfo{booktitle}{\emph{International Conference on Learning
  Representations}}.
\newblock
\urldef\tempurl%
\url{https://openreview.net/forum?id=BkG8sjR5Km}
\showURL{%
\tempurl}


\bibitem[\protect\citeauthoryear{Liu, Lever, Wang, Merel, Eslami, Hennes,
  Czarnecki, Tassa, Omidshafiei, Abdolmaleki, et~al\mbox{.}}{Liu
  et~al\mbox{.}}{2022b}]%
        {liu2022motor}
\bibfield{author}{\bibinfo{person}{Siqi Liu}, \bibinfo{person}{Guy Lever},
  \bibinfo{person}{Zhe Wang}, \bibinfo{person}{Josh Merel},
  \bibinfo{person}{SM~Ali Eslami}, \bibinfo{person}{Daniel Hennes},
  \bibinfo{person}{Wojciech~M Czarnecki}, \bibinfo{person}{Yuval Tassa},
  \bibinfo{person}{Shayegan Omidshafiei}, \bibinfo{person}{Abbas Abdolmaleki},
  {et~al\mbox{.}}} \bibinfo{year}{2022}\natexlab{b}.
\newblock \showarticletitle{From motor control to team play in simulated
  humanoid football}.
\newblock \bibinfo{journal}{\emph{Science Robotics}} \bibinfo{volume}{7},
  \bibinfo{number}{69} (\bibinfo{year}{2022}), \bibinfo{pages}{eabo0235}.
\newblock


\bibitem[\protect\citeauthoryear{Liu, Lever, Wang, Merel, Eslami, Hennes,
  Czarnecki, Tassa, Omidshafiei, Abdolmaleki, Siegel, Hasenclever, Marris,
  Tunyasuvunakool, Song, Wulfmeier, Muller, Haarnoja, Tracey, Tuyls, Graepel,
  and Heess}{Liu et~al\mbox{.}}{2021}]%
        {liu_motor_2021}
\bibfield{author}{\bibinfo{person}{Siqi Liu}, \bibinfo{person}{Guy Lever},
  \bibinfo{person}{Zhe Wang}, \bibinfo{person}{Josh Merel},
  \bibinfo{person}{S.~M.~Ali Eslami}, \bibinfo{person}{Daniel Hennes},
  \bibinfo{person}{Wojciech~M. Czarnecki}, \bibinfo{person}{Yuval Tassa},
  \bibinfo{person}{Shayegan Omidshafiei}, \bibinfo{person}{Abbas Abdolmaleki},
  \bibinfo{person}{Noah Siegel}, \bibinfo{person}{Leonard Hasenclever},
  \bibinfo{person}{Luke Marris}, \bibinfo{person}{Saran Tunyasuvunakool},
  \bibinfo{person}{H.~Francis Song}, \bibinfo{person}{Markus Wulfmeier},
  \bibinfo{person}{Paul Muller}, \bibinfo{person}{Tuomas Haarnoja},
  \bibinfo{person}{Brendan~D. Tracey}, \bibinfo{person}{Karl Tuyls},
  \bibinfo{person}{Thore Graepel}, {and} \bibinfo{person}{Nicolas Manfred~Otto
  Heess}.} \bibinfo{year}{2021}\natexlab{}.
\newblock \showarticletitle{From Motor Control to Team Play in Simulated
  Humanoid Football}.
\newblock \bibinfo{journal}{\emph{Science robotics}}  \bibinfo{volume}{7 69}
  (\bibinfo{year}{2021}), \bibinfo{pages}{eabo0235}.
\newblock
\urldef\tempurl%
\url{https://api.semanticscholar.org/CorpusID:235195692}
\showURL{%
\tempurl}


\bibitem[\protect\citeauthoryear{Liu, Marris, Hennes, Merel, Heess, and
  Graepel}{Liu et~al\mbox{.}}{2022c}]%
        {liu2022neupl}
\bibfield{author}{\bibinfo{person}{Siqi Liu}, \bibinfo{person}{Luke Marris},
  \bibinfo{person}{Daniel Hennes}, \bibinfo{person}{Josh Merel},
  \bibinfo{person}{Nicolas Heess}, {and} \bibinfo{person}{Thore Graepel}.}
  \bibinfo{year}{2022}\natexlab{c}.
\newblock \showarticletitle{Neu{PL}: Neural Population Learning}. In
  \bibinfo{booktitle}{\emph{International Conference on Learning
  Representations}}.
\newblock
\urldef\tempurl%
\url{https://openreview.net/forum?id=MIX3fJkl_1}
\showURL{%
\tempurl}


\bibitem[\protect\citeauthoryear{Marris, Gemp, Anthony, Tacchetti, Liu, and
  Tuyls}{Marris et~al\mbox{.}}{2022}]%
        {marris2022turbocharging}
\bibfield{author}{\bibinfo{person}{Luke Marris}, \bibinfo{person}{Ian Gemp},
  \bibinfo{person}{Thomas Anthony}, \bibinfo{person}{Andrea Tacchetti},
  \bibinfo{person}{Siqi Liu}, {and} \bibinfo{person}{Karl Tuyls}.}
  \bibinfo{year}{2022}\natexlab{}.
\newblock \showarticletitle{Turbocharging Solution Concepts: Solving {NE}s,
  {CE}s and {CCE}s with Neural Equilibrium Solvers}. In
  \bibinfo{booktitle}{\emph{Advances in Neural Information Processing
  Systems}}, \bibfield{editor}{\bibinfo{person}{Alice~H. Oh},
  \bibinfo{person}{Alekh Agarwal}, \bibinfo{person}{Danielle Belgrave}, {and}
  \bibinfo{person}{Kyunghyun Cho}} (Eds.).
\newblock
\urldef\tempurl%
\url{https://openreview.net/forum?id=RczPtvlaXPH}
\showURL{%
\tempurl}


\bibitem[\protect\citeauthoryear{Marris, Muller, Lanctot, Tuyls, and
  Graepel}{Marris et~al\mbox{.}}{2021}]%
        {marris2021jpsroicml}
\bibfield{author}{\bibinfo{person}{Luke Marris}, \bibinfo{person}{Paul Muller},
  \bibinfo{person}{Marc Lanctot}, \bibinfo{person}{Karl Tuyls}, {and}
  \bibinfo{person}{Thore Graepel}.} \bibinfo{year}{2021}\natexlab{}.
\newblock \showarticletitle{Multi-Agent Training beyond Zero-Sum with
  Correlated Equilibrium Meta-Solvers}. In
  \bibinfo{booktitle}{\emph{Proceedings of the 38th International Conference on
  Machine Learning}} \emph{(\bibinfo{series}{Proceedings of Machine Learning
  Research}, Vol.~\bibinfo{volume}{139})},
  \bibfield{editor}{\bibinfo{person}{Marina Meila} {and} \bibinfo{person}{Tong
  Zhang}} (Eds.). \bibinfo{publisher}{PMLR}, \bibinfo{pages}{7480--7491}.
\newblock
\urldef\tempurl%
\url{http://proceedings.mlr.press/v139/marris21a.html}
\showURL{%
\tempurl}


\bibitem[\protect\citeauthoryear{Mcaleer, Lanier, Fox, and Baldi}{Mcaleer
  et~al\mbox{.}}{2020}]%
        {mcaleer_pipeline_2021}
\bibfield{author}{\bibinfo{person}{Stephen Mcaleer}, \bibinfo{person}{JB
  Lanier}, \bibinfo{person}{Roy Fox}, {and} \bibinfo{person}{Pierre Baldi}.}
  \bibinfo{year}{2020}\natexlab{}.
\newblock \showarticletitle{Pipeline PSRO: A Scalable Approach for Finding
  Approximate Nash Equilibria in Large Games}. In
  \bibinfo{booktitle}{\emph{Advances in Neural Information Processing
  Systems}}, \bibfield{editor}{\bibinfo{person}{H.~Larochelle},
  \bibinfo{person}{M.~Ranzato}, \bibinfo{person}{R.~Hadsell},
  \bibinfo{person}{M.~F. Balcan}, {and} \bibinfo{person}{H.~Lin}} (Eds.),
  Vol.~\bibinfo{volume}{33}. \bibinfo{publisher}{Curran Associates, Inc.},
  \bibinfo{pages}{20238--20248}.
\newblock
\urldef\tempurl%
\url{https://proceedings.neurips.cc/paper/2020/file/e9bcd1b063077573285ae1a41025f5dc-Paper.pdf}
\showURL{%
\tempurl}


\bibitem[\protect\citeauthoryear{{McMahan}, Gordon, and Blum}{{McMahan}
  et~al\mbox{.}}{2003}]%
        {mcmahan_planning_2003}
\bibfield{author}{\bibinfo{person}{H.~Brendan {McMahan}},
  \bibinfo{person}{Geoffrey~J. Gordon}, {and} \bibinfo{person}{Avrim Blum}.}
  \bibinfo{year}{2003}\natexlab{}.
\newblock \showarticletitle{Planning in the presence of cost functions
  controlled by an adversary}. In \bibinfo{booktitle}{\emph{Proceedings of the
  Twentieth International Conference on International Conference on Machine
  Learning}} (Washington, {DC}, {USA}, 2003-08-21)
  \emph{(\bibinfo{series}{{ICML}'03})}. \bibinfo{publisher}{{AAAI} Press},
  \bibinfo{pages}{536--543}.
\newblock
\showISBNx{978-1-57735-189-4}


\bibitem[\protect\citeauthoryear{Moulin and Vial}{Moulin and Vial}{1978}]%
        {moulin1978strategically}
\bibfield{author}{\bibinfo{person}{Herv{\'e} Moulin} {and} \bibinfo{person}{J-P
  Vial}.} \bibinfo{year}{1978}\natexlab{}.
\newblock \showarticletitle{Strategically zero-sum games: the class of games
  whose completely mixed equilibria cannot be improved upon}.
\newblock \bibinfo{journal}{\emph{International Journal of Game Theory}}
  \bibinfo{volume}{7}, \bibinfo{number}{3} (\bibinfo{year}{1978}),
  \bibinfo{pages}{201--221}.
\newblock


\bibitem[\protect\citeauthoryear{Muller, Omidshafiei, Rowland, Tuyls, Perolat,
  Liu, Hennes, Marris, Lanctot, Hughes, Wang, Lever, Heess, Graepel, and
  Munos}{Muller et~al\mbox{.}}{2020}]%
        {Muller2020A}
\bibfield{author}{\bibinfo{person}{Paul Muller}, \bibinfo{person}{Shayegan
  Omidshafiei}, \bibinfo{person}{Mark Rowland}, \bibinfo{person}{Karl Tuyls},
  \bibinfo{person}{Julien Perolat}, \bibinfo{person}{Siqi Liu},
  \bibinfo{person}{Daniel Hennes}, \bibinfo{person}{Luke Marris},
  \bibinfo{person}{Marc Lanctot}, \bibinfo{person}{Edward Hughes},
  \bibinfo{person}{Zhe Wang}, \bibinfo{person}{Guy Lever},
  \bibinfo{person}{Nicolas Heess}, \bibinfo{person}{Thore Graepel}, {and}
  \bibinfo{person}{Remi Munos}.} \bibinfo{year}{2020}\natexlab{}.
\newblock \showarticletitle{A Generalized Training Approach for Multiagent
  Learning}. In \bibinfo{booktitle}{\emph{International Conference on Learning
  Representations}}.
\newblock
\urldef\tempurl%
\url{https://openreview.net/forum?id=Bkl5kxrKDr}
\showURL{%
\tempurl}


\bibitem[\protect\citeauthoryear{Munos, Stepleton, Harutyunyan, and
  Bellemare}{Munos et~al\mbox{.}}{2016}]%
        {munos2016safe}
\bibfield{author}{\bibinfo{person}{R{\'e}mi Munos}, \bibinfo{person}{Tom
  Stepleton}, \bibinfo{person}{Anna Harutyunyan}, {and} \bibinfo{person}{Marc
  Bellemare}.} \bibinfo{year}{2016}\natexlab{}.
\newblock \showarticletitle{Safe and efficient off-policy reinforcement
  learning}.
\newblock \bibinfo{journal}{\emph{Advances in neural information processing
  systems}}  \bibinfo{volume}{29} (\bibinfo{year}{2016}).
\newblock


\bibitem[\protect\citeauthoryear{Nash}{Nash}{1951}]%
        {nash1951}
\bibfield{author}{\bibinfo{person}{John Nash}.}
  \bibinfo{year}{1951}\natexlab{}.
\newblock \showarticletitle{Non-Cooperative Games}.
\newblock \bibinfo{journal}{\emph{Annals of Mathematics}} \bibinfo{volume}{54},
  \bibinfo{number}{2} (\bibinfo{year}{1951}), \bibinfo{pages}{286--295}.
\newblock
\showISSN{0003486X}
\urldef\tempurl%
\url{http://www.jstor.org/stable/1969529}
\showURL{%
\tempurl}


\bibitem[\protect\citeauthoryear{Omidshafiei, Papadimitriou, Piliouras, Tuyls,
  Rowland, Lespiau, Czarnecki, Lanctot, Perolat, and Munos}{Omidshafiei
  et~al\mbox{.}}{2019}]%
        {omidshafiei2019alpha}
\bibfield{author}{\bibinfo{person}{Shayegan Omidshafiei},
  \bibinfo{person}{Christos Papadimitriou}, \bibinfo{person}{Georgios
  Piliouras}, \bibinfo{person}{Karl Tuyls}, \bibinfo{person}{Mark Rowland},
  \bibinfo{person}{Jean-Baptiste Lespiau}, \bibinfo{person}{Wojciech~M
  Czarnecki}, \bibinfo{person}{Marc Lanctot}, \bibinfo{person}{Julien Perolat},
  {and} \bibinfo{person}{Remi Munos}.} \bibinfo{year}{2019}\natexlab{}.
\newblock \showarticletitle{$\alpha$-rank: Multi-agent evaluation by
  evolution}.
\newblock \bibinfo{journal}{\emph{Scientific reports}} \bibinfo{volume}{9},
  \bibinfo{number}{1} (\bibinfo{year}{2019}), \bibinfo{pages}{1--29}.
\newblock


\bibitem[\protect\citeauthoryear{Perez, Strub, De~Vries, Dumoulin, and
  Courville}{Perez et~al\mbox{.}}{2018}]%
        {perez2018film}
\bibfield{author}{\bibinfo{person}{Ethan Perez}, \bibinfo{person}{Florian
  Strub}, \bibinfo{person}{Harm De~Vries}, \bibinfo{person}{Vincent Dumoulin},
  {and} \bibinfo{person}{Aaron Courville}.} \bibinfo{year}{2018}\natexlab{}.
\newblock \showarticletitle{Film: Visual reasoning with a general conditioning
  layer}. In \bibinfo{booktitle}{\emph{Proceedings of the AAAI Conference on
  Artificial Intelligence}}, Vol.~\bibinfo{volume}{32}.
\newblock


\bibitem[\protect\citeauthoryear{Perolat, De~Vylder, Hennes, Tarassov, Strub,
  de~Boer, Muller, Connor, Burch, Anthony, et~al\mbox{.}}{Perolat
  et~al\mbox{.}}{2022}]%
        {perolat2022mastering}
\bibfield{author}{\bibinfo{person}{Julien Perolat}, \bibinfo{person}{Bart
  De~Vylder}, \bibinfo{person}{Daniel Hennes}, \bibinfo{person}{Eugene
  Tarassov}, \bibinfo{person}{Florian Strub}, \bibinfo{person}{Vincent de
  Boer}, \bibinfo{person}{Paul Muller}, \bibinfo{person}{Jerome~T Connor},
  \bibinfo{person}{Neil Burch}, \bibinfo{person}{Thomas Anthony},
  {et~al\mbox{.}}} \bibinfo{year}{2022}\natexlab{}.
\newblock \showarticletitle{Mastering the game of Stratego with model-free
  multiagent reinforcement learning}.
\newblock \bibinfo{journal}{\emph{Science}} \bibinfo{volume}{378},
  \bibinfo{number}{6623} (\bibinfo{year}{2022}), \bibinfo{pages}{990--996}.
\newblock


\bibitem[\protect\citeauthoryear{Perolat, Munos, Lespiau, Omidshafiei, Rowland,
  Ortega, Burch, Anthony, Balduzzi, De~Vylder, et~al\mbox{.}}{Perolat
  et~al\mbox{.}}{2021}]%
        {perolat2021poincare}
\bibfield{author}{\bibinfo{person}{Julien Perolat}, \bibinfo{person}{Remi
  Munos}, \bibinfo{person}{Jean-Baptiste Lespiau}, \bibinfo{person}{Shayegan
  Omidshafiei}, \bibinfo{person}{Mark Rowland}, \bibinfo{person}{Pedro Ortega},
  \bibinfo{person}{Neil Burch}, \bibinfo{person}{Thomas Anthony},
  \bibinfo{person}{David Balduzzi}, \bibinfo{person}{Bart De~Vylder},
  {et~al\mbox{.}}} \bibinfo{year}{2021}\natexlab{}.
\newblock \showarticletitle{From Poincar{\'e} recurrence to convergence in
  imperfect information games: Finding equilibrium via regularization}. In
  \bibinfo{booktitle}{\emph{International Conference on Machine Learning}}.
  PMLR, \bibinfo{pages}{8525--8535}.
\newblock


\bibitem[\protect\citeauthoryear{Ross}{Ross}{1971}]%
        {ross1971goofspiel}
\bibfield{author}{\bibinfo{person}{Sheldon~M Ross}.}
  \bibinfo{year}{1971}\natexlab{}.
\newblock \showarticletitle{Goofspiel—the game of pure strategy}.
\newblock \bibinfo{journal}{\emph{Journal of Applied Probability}}
  \bibinfo{volume}{8}, \bibinfo{number}{3} (\bibinfo{year}{1971}),
  \bibinfo{pages}{621--625}.
\newblock


\bibitem[\protect\citeauthoryear{Samuel}{Samuel}{1967}]%
        {samuel1967some}
\bibfield{author}{\bibinfo{person}{Arthur~L Samuel}.}
  \bibinfo{year}{1967}\natexlab{}.
\newblock \showarticletitle{Some studies in machine learning using the game of
  checkers. II—Recent progress}.
\newblock \bibinfo{journal}{\emph{IBM Journal of research and development}}
  \bibinfo{volume}{11}, \bibinfo{number}{6} (\bibinfo{year}{1967}),
  \bibinfo{pages}{601--617}.
\newblock


\bibitem[\protect\citeauthoryear{Schwarz, Czarnecki, Luketina,
  Grabska-Barwinska, Teh, Pascanu, and Hadsell}{Schwarz et~al\mbox{.}}{2018}]%
        {schwarz2018progress}
\bibfield{author}{\bibinfo{person}{Jonathan Schwarz}, \bibinfo{person}{Wojciech
  Czarnecki}, \bibinfo{person}{Jelena Luketina}, \bibinfo{person}{Agnieszka
  Grabska-Barwinska}, \bibinfo{person}{Yee~Whye Teh}, \bibinfo{person}{Razvan
  Pascanu}, {and} \bibinfo{person}{Raia Hadsell}.}
  \bibinfo{year}{2018}\natexlab{}.
\newblock \showarticletitle{Progress \& compress: A scalable framework for
  continual learning}. In \bibinfo{booktitle}{\emph{International Conference on
  Machine Learning}}. PMLR, \bibinfo{pages}{4528--4537}.
\newblock


\bibitem[\protect\citeauthoryear{Shahriari, Abdolmaleki, Byravan, Friesen, Liu,
  Springenberg, Heess, Hoffman, and Riedmiller}{Shahriari
  et~al\mbox{.}}{2022}]%
        {shahriari2022revisiting}
\bibfield{author}{\bibinfo{person}{Bobak Shahriari}, \bibinfo{person}{Abbas
  Abdolmaleki}, \bibinfo{person}{Arunkumar Byravan}, \bibinfo{person}{Abe
  Friesen}, \bibinfo{person}{Siqi Liu}, \bibinfo{person}{Jost~Tobias
  Springenberg}, \bibinfo{person}{Nicolas Heess}, \bibinfo{person}{Matt
  Hoffman}, {and} \bibinfo{person}{Martin Riedmiller}.}
  \bibinfo{year}{2022}\natexlab{}.
\newblock \showarticletitle{Revisiting Gaussian mixture critics in off-policy
  reinforcement learning: a sample-based approach}.
\newblock \bibinfo{journal}{\emph{arXiv preprint arXiv:2204.10256}}
  (\bibinfo{year}{2022}).
\newblock


\bibitem[\protect\citeauthoryear{Silver, Hubert, Schrittwieser, Antonoglou,
  Lai, Guez, Lanctot, Sifre, Kumaran, Graepel, et~al\mbox{.}}{Silver
  et~al\mbox{.}}{2018}]%
        {silver2018general}
\bibfield{author}{\bibinfo{person}{David Silver}, \bibinfo{person}{Thomas
  Hubert}, \bibinfo{person}{Julian Schrittwieser}, \bibinfo{person}{Ioannis
  Antonoglou}, \bibinfo{person}{Matthew Lai}, \bibinfo{person}{Arthur Guez},
  \bibinfo{person}{Marc Lanctot}, \bibinfo{person}{Laurent Sifre},
  \bibinfo{person}{Dharshan Kumaran}, \bibinfo{person}{Thore Graepel},
  {et~al\mbox{.}}} \bibinfo{year}{2018}\natexlab{}.
\newblock \showarticletitle{A general reinforcement learning algorithm that
  masters chess, shogi, and Go through self-play}.
\newblock \bibinfo{journal}{\emph{Science}} \bibinfo{volume}{362},
  \bibinfo{number}{6419} (\bibinfo{year}{2018}), \bibinfo{pages}{1140--1144}.
\newblock


\bibitem[\protect\citeauthoryear{Silver, Schrittwieser, Simonyan, Antonoglou,
  Huang, Guez, Hubert, Baker, Lai, Bolton, et~al\mbox{.}}{Silver
  et~al\mbox{.}}{2017}]%
        {silver2017mastering}
\bibfield{author}{\bibinfo{person}{David Silver}, \bibinfo{person}{Julian
  Schrittwieser}, \bibinfo{person}{Karen Simonyan}, \bibinfo{person}{Ioannis
  Antonoglou}, \bibinfo{person}{Aja Huang}, \bibinfo{person}{Arthur Guez},
  \bibinfo{person}{Thomas Hubert}, \bibinfo{person}{Lucas Baker},
  \bibinfo{person}{Matthew Lai}, \bibinfo{person}{Adrian Bolton},
  {et~al\mbox{.}}} \bibinfo{year}{2017}\natexlab{}.
\newblock \showarticletitle{Mastering the game of go without human knowledge}.
\newblock \bibinfo{journal}{\emph{nature}} \bibinfo{volume}{550},
  \bibinfo{number}{7676} (\bibinfo{year}{2017}), \bibinfo{pages}{354--359}.
\newblock


\bibitem[\protect\citeauthoryear{Smith, Anthony, and Wellman}{Smith
  et~al\mbox{.}}{2020}]%
        {smith2020iterative}
\bibfield{author}{\bibinfo{person}{Max Smith}, \bibinfo{person}{Thomas
  Anthony}, {and} \bibinfo{person}{Michael Wellman}.}
  \bibinfo{year}{2020}\natexlab{}.
\newblock \showarticletitle{Iterative Empirical Game Solving via Single Policy
  Best Response}. In \bibinfo{booktitle}{\emph{International Conference on
  Learning Representations}}.
\newblock


\bibitem[\protect\citeauthoryear{Southey, Bowling, Larson, Piccione, Burch,
  Billings, and Rayner}{Southey et~al\mbox{.}}{2005}]%
        {southey2005bayes}
\bibfield{author}{\bibinfo{person}{Finnegan Southey}, \bibinfo{person}{Michael
  Bowling}, \bibinfo{person}{Bryce Larson}, \bibinfo{person}{Carmelo Piccione},
  \bibinfo{person}{Neil Burch}, \bibinfo{person}{Darse Billings}, {and}
  \bibinfo{person}{Chris Rayner}.} \bibinfo{year}{2005}\natexlab{}.
\newblock \showarticletitle{Bayes' bluff: opponent modelling in poker}. In
  \bibinfo{booktitle}{\emph{Proceedings of the Twenty-First Conference on
  Uncertainty in Artificial Intelligence}}. \bibinfo{pages}{550--558}.
\newblock


\bibitem[\protect\citeauthoryear{Szafron, Gibson, and Sturtevant}{Szafron
  et~al\mbox{.}}{2013}]%
        {szafron2013parameterized}
\bibfield{author}{\bibinfo{person}{Duane Szafron}, \bibinfo{person}{Richard~G
  Gibson}, {and} \bibinfo{person}{Nathan~R Sturtevant}.}
  \bibinfo{year}{2013}\natexlab{}.
\newblock \showarticletitle{A parameterized family of equilibrium profiles for
  three-player kuhn poker.}. In \bibinfo{booktitle}{\emph{AAMAS}},
  Vol.~\bibinfo{volume}{13}. \bibinfo{pages}{247--254}.
\newblock


\bibitem[\protect\citeauthoryear{Tassa, Tunyasuvunakool, Muldal, Doron, Liu,
  Bohez, Merel, Erez, Lillicrap, and Heess}{Tassa et~al\mbox{.}}{2020}]%
        {tassa2020dm_control}
\bibfield{author}{\bibinfo{person}{Yuval Tassa}, \bibinfo{person}{Saran
  Tunyasuvunakool}, \bibinfo{person}{Alistair Muldal}, \bibinfo{person}{Yotam
  Doron}, \bibinfo{person}{Siqi Liu}, \bibinfo{person}{Steven Bohez},
  \bibinfo{person}{Josh Merel}, \bibinfo{person}{Tom Erez},
  \bibinfo{person}{Timothy~P. Lillicrap}, {and} \bibinfo{person}{Nicolas
  Manfred~Otto Heess}.} \bibinfo{year}{2020}\natexlab{}.
\newblock \showarticletitle{dm\_control: Software and Tasks for Continuous
  Control}.
\newblock \bibinfo{journal}{\emph{Softw. Impacts}}  \bibinfo{volume}{6}
  (\bibinfo{year}{2020}), \bibinfo{pages}{100022}.
\newblock
\urldef\tempurl%
\url{https://api.semanticscholar.org/CorpusID:219980295}
\showURL{%
\tempurl}


\bibitem[\protect\citeauthoryear{Tesauro et~al\mbox{.}}{Tesauro
  et~al\mbox{.}}{1995}]%
        {tesauro1995temporal}
\bibfield{author}{\bibinfo{person}{Gerald Tesauro} {et~al\mbox{.}}}
  \bibinfo{year}{1995}\natexlab{}.
\newblock \showarticletitle{Temporal difference learning and TD-Gammon}.
\newblock \bibinfo{journal}{\emph{Commun. ACM}} \bibinfo{volume}{38},
  \bibinfo{number}{3} (\bibinfo{year}{1995}), \bibinfo{pages}{58--68}.
\newblock


\bibitem[\protect\citeauthoryear{Vinyals, Babuschkin, Czarnecki, Mathieu,
  Dudzik, Chung, Choi, Powell, Ewalds, Georgiev, et~al\mbox{.}}{Vinyals
  et~al\mbox{.}}{2019}]%
        {vinyals2019grandmaster}
\bibfield{author}{\bibinfo{person}{Oriol Vinyals}, \bibinfo{person}{Igor
  Babuschkin}, \bibinfo{person}{Wojciech~M Czarnecki},
  \bibinfo{person}{Micha{\"e}l Mathieu}, \bibinfo{person}{Andrew Dudzik},
  \bibinfo{person}{Junyoung Chung}, \bibinfo{person}{David~H Choi},
  \bibinfo{person}{Richard Powell}, \bibinfo{person}{Timo Ewalds},
  \bibinfo{person}{Petko Georgiev}, {et~al\mbox{.}}}
  \bibinfo{year}{2019}\natexlab{}.
\newblock \showarticletitle{Grandmaster level in StarCraft II using multi-agent
  reinforcement learning}.
\newblock \bibinfo{journal}{\emph{Nature}} \bibinfo{volume}{575},
  \bibinfo{number}{7782} (\bibinfo{year}{2019}), \bibinfo{pages}{350--354}.
\newblock


\end{thebibliography}

\newpage
\appendix
\onecolumn

\renewcommand{\theequation}{S.\arabic{equation}}

\section{Convergence Guarantees to CCE}
\label{app:cce_convergence}

Deep RL algorithms often produce stochastic policies. For instance, policy gradient and MPO (used in this work) produce stochastic policies. This may appear problematic because \jpsro's convergence proof assumes the BR operator returns deterministic policies, of which there are a finite number and, in the worst case, must all be enumerated. Fortunately, BR operators can be encouraged to produce specific stochastic policies through regularization, such as maximum entropy. We prove that there are a finite number of such specific stochastic policies. This extends \jpsro to provably converge with certain stochastic BR operators. We therefore use maximum-entropy RL for best-respond learning in this work.

\begin{definition}[Unique Stochastic Policy Mapping]
    A unique mapping from a set of deterministic policies to a stochastic policy. The subset of stochastic policies reachable from such a mapping is called the unique stochastic policy set.
\end{definition}

Many such unique stochastic policy mappings can be defined. A popular choice is the maximum entropy mapping. Because there are a finite number of subsets of deterministic policies, there are also a finite number of unique stochastic policies.

\begin{lemma}[Finite Unique Stochastic Policies] \label{lemma:finite_unique_policies}
    There are a finite number of unique stochastic policies. If a player has $|\Pi_p|$ deterministic policies, they have at most $2^{|\Pi_p|} - 1$ unique stochastic policies.
\end{lemma}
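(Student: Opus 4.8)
The plan is to recognise that the statement is a pure counting argument: a \emph{unique stochastic policy mapping} is by definition a function, so its image is no larger than its (finite) domain.

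First I would make the domain explicit. By the definition of a unique stochastic policy mapping, such a map $m$ takes as input a nonempty set of deterministic policies available to player $p$ and returns a single stochastic policy. Since player $p$ has $|\Pi_p|$ deterministic policies in its (finite) population, the admissible inputs to $m$ are exactly the nonempty subsets of $\Pi_p$, of which there are $2^{|\Pi_p|} - 1$. The empty set is excluded because a stochastic policy must be formed from at least one deterministic policy (the maximum-entropy mixture, for instance, is undefined over empty support), which is the source of the ``$-1$''.

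Next I would invoke the fact that $m$ is single-valued: each nonempty $B \subseteq \Pi_p$ is sent to exactly one stochastic policy $m(B)$. Hence the unique stochastic policy set, which is precisely the image $\{\, m(B) : \emptyset \neq B \subseteq \Pi_p \,\}$, has cardinality at most that of its domain, i.e.\ at most $2^{|\Pi_p|} - 1$, and in particular is finite. Distinct subsets may of course map to the same stochastic policy, so the bound need not be tight; as a sanity check, the $|\Pi_p|$ singletons recover the original deterministic policies (under the max-entropy mapping a singleton maps to itself), consistent with $|\Pi_p| \le 2^{|\Pi_p|} - 1$.

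I do not anticipate a genuine obstacle. The only points to be careful about are (i) justifying that the set of deterministic policies is finite in the first place, which holds because at any iteration of \neupljpsro the population $\Pi_p$ is a finite set constructed one policy at a time, and (ii) being precise that empty support is disallowed, so the bound is $2^{|\Pi_p|}-1$ rather than $2^{|\Pi_p|}$. With these noted the lemma follows immediately, and it is exactly what is needed to extend \jpsro's termination argument from deterministic to (maximum-entropy) stochastic best-response operators: only finitely many distinct best responses can ever be added before the population saturates.
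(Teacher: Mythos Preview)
Your core argument is correct and essentially identical to the paper's: count the $2^{|\Pi_p|}-1$ nonempty subsets of $\Pi_p$, observe that the mapping is single-valued, and conclude the image has at most that many elements.

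One correction to your aside (i): the finiteness of $\Pi_p$ is \emph{not} because the \neupljpsro population is built one policy at a time. In this lemma $\Pi_p$ denotes the set of all deterministic policies available to player $p$ in the underlying finite extensive-form game, which is finite because there are finitely many information states and finitely many actions at each. Your reading would make the termination argument circular: the lemma's whole purpose is to upper-bound, a priori and independently of the iteration count, how many distinct stochastic best responses the BR operator can ever return, so that the \jpsro-style enumeration argument still goes through. Taking $|\Pi_p|$ to be the current population size would give a bound that grows with $t$ and prove nothing about termination.
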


\begin{proof}
    Given a finite nonempty set of deterministic policies. A unique stochastic policy mapping will uniquely map this set to a stochastic policy. Suppose that there are $|\Pi_p|$ deterministic policies, there are $2^{|\Pi_p|} - 1$ nonempty subsets of deterministic policies. Some of these subsets could map to the same stochastic policy. Therefore $2^{|\Pi_p|} - 1$ is a finite upper bound on the number of reachable stochastic policies. 
\end{proof}

If a best-response operator mixes uniquely amongst possible deterministic policies, the best-response operator can only produce a finite number of unique stochastic policies. The maximum entropy best-response operator is one such unique mixing. This is particularly useful when using RL as a best-response operator. Often RL algorithms, such as policy gradient, produce stochastic policies and and use entropy regularization to aid exploration.

\section{Methods}

\subsection{Representing best-responses to metagame mixed-strategies}
\label{app:representing_br}

As the number of co-player joint strategies under the marginal CCE distribution $\sigma_\notp$ grows exponentially in the number of co-players, care needs to be taken when implementing the conditional BR policy network $\Pi_\phi(\cdot | s, \sigma_\notp)$.

Our approach is the following. To represent $\sigma_\notp$, we consider the {\tt top-k} elements of the joint distribution $\sigma$ and for each strategy assignment $a = (a_1, \dots, a_n)$, return an aggregated strategy assignment representation using strategy embeddings from all players other than $\cV_p$. To offer a probabilistic interpretation of the marginal CCE representation, the final representation is a sum over strategy assignment representations, weighted by their probabilities under $\sigma$. This marginal CCE encoder $g$ is described in Equation~\ref{eq:marginal_encoder}. 

This implementation has several practical benefits. First, the memory footprint of the representation is constant as a function of the hyper-parameter $K$. This offers a trade-off between lower memory footprint and a lossless representation of the marginal distribution $\sigma_\notp$. Second, this representation preserves the probabilistic interpretation of $\sigma_\notp$. Co-player joint strategies that are more probable under $\sigma_\notp$ also features more prominently in the final representation. Lastly, this representation is amenable to hardware scaling, as marginal distributions from the perspectives of different players can be efficiently batch-processed with hardware acceleration.

\begin{equation} \label{eq:marginal_encoder}
  g(\cV, \sigma, p, K) = \sum_{a \in \texttt{top-k}(\sigma, K)} \sigma(a) f( \nu^{a_1}_1, \dots, \nu^{a_{p-1}}_{p-1}, \boldsymbol{0}, \nu^{a_{p+1}}_{p+1}, \dots, \nu^{a_n}_n )
\end{equation}

In all experiments reported in this work, we used $K = 96$. Empirically, we find that our representation of marginal CCE distributions is almost always lossless thanks to the inherent sparsity of the CCE solver we used. We show in Table~\ref{tab:top_k} the mean and standard deviation of the number of joint-actions with non-trivial support under the full CCE joint distributions in each game. We note that the co-player mixed-strategies we need to represent has one fewer dimension (action dimension of the best-responding player) than the full joint distribution.

\begin{table}[ht]
\centering
\caption{The number of joint actions with non-trivial support for each OpenSpiel game.\label{tab:top_k}}
\begin{tabular}{|l|c|c|c|}
\hline
Game                & \# joint actions (prob \textgreater 1e-3) & \# joint actions (prob \textgreater 5e-3) & \# joint actions (prob \textgreater 1e-2) \\ \hline
goofspiel\_2p\_5c   & 73.6 ± 42.2                               & 51.9 ± 25.3                               & 31.5 ± 12.7                               \\
kuhn\_poker\_2p     & 33.1 ± 22.6                               & 28.0 ± 17.1                               & 22.6 ± 11.0                               \\
kuhn\_poker\_3p     & 147.1 ± 127.2                             & 40.9 ± 28.4                               & 14.4 ± 14.7                               \\
leduc\_poker\_2p    & 24.0 ± 16.7                               & 20.8 ± 13.8                               & 17.8 ± 10.7                               \\
sheriff\_2p         & 42.9 ± 17.3                               & 34.2 ± 11.7                               & 24.4 ± 6.9                                \\
trade\_comm\_2p\_3i & 234.4 ± 158.5                             & 32.0 ± 31.3                               & 12.0 ± 14.6                               \\ \hline
\end{tabular}
\end{table}

The choice of strategy assignment aggregation function $f$ deserves attention too. If the game is asymmetric across all players, $f(\nu_1, \dots, \nu_n)$ can simply embed a concatenation of strategy embeddings from all players in order. If players take on symmetric roles, it maybe advantageous to leverage symmetry in the representation of strategy assignments. For instance, if player $i$ and player $j$ have symmetric roles in the game, we can set $\cV_i = \cV_j$ and apply an order-invariant pooling operator over the strategy embeddings of the two players such that $f(\dots, \nu^{a_i}_i, \dots, \nu^{a_j}_j, \dots) = f(\dots, \nu^{a_j}_i, \dots, \nu^{a_i}_j, \dots)$ by construction. We exploited symmetry in our experiments for symmetric games such as {\tt goofspiel} and {\tt capture-the-flag}.

\subsection{Policy optimisation}
\label{app:simulation}

We generate training episodes as follows. At iteration $t$ of best-response learning, we collect trajectories from policies sampled according to one of the joint CCE distributions $a = (a_1, \dots, a_n) \sim \sigma^\tau$ with $\tau < t$. With probability $\Pr_{br}(\tau, t)$, one of the players is randomly select to be the best-responding player of the episode. If player $p$ is best-responding, we execute its BR conditional policy network $\Pi_\phi(\cdot | s, \sigma^\tau_\notp)$. For all players $k$ that are not best-responding, $\Pi_{\hat\theta}(\cdot | s, {\hat\nu}^{a_k}_k)$ is used. All players learn from such trajectories. For player $p$, it uses this trajectory to optimise its RL objective. At the same time, the behaviours of its BR policy $\Pi_\phi(\cdot | s, \sigma^\tau_\notp)$ is distilled into $\Pi_\theta(\cdot | s, \nu^{\tau + 1}_p)$ by minimising the KL divergence between the two policies $\kld{\Pi_\theta(\cdot | s, \nu^{\tau + 1}_p)}{\Pi_\phi(\cdot | s, \sigma^\tau_\notp)}$. This corresponds to introducing player $p$'s latest strategy to the neural population.
For all non best-responding players $k$, their behaviours are regularised towards a stationary reference policy, by minimising the divergence between the current co-playing policies and the reference policies $\kld{\Pi_\theta(\cdot | s, \nu^{a_k}_k)}{\Pi_{\hat\theta}(\cdot | s, {\hat\nu}^{a_k}_k)}$. This regularisation procedure ensures that player $k$ would retain the behaviours of its previously acquired strategies over time. 

$\Pr_{br}(\tau, t)$ controls the balance between maintaining the stationarity of known policies and the incremental learning of new strategies. In our experiments, we used the following setting across all experiments, 
\[
    \Pr_{br}(\tau, t) = 
        \begin{cases}
            1 & \text{if } t = 1 \\
            \min(0.5, \max(0.2, t / T)) & \text{if } t > 1 \text{ and } \tau =  t - 1 \\
            0                           & \text{otherwise.}
        \end{cases}
\]

\subsection{Metagame solving}
\label{app:metagame_solving}

To solve for a metagame CCE distribution $\sigma^t \gets \textsc{MSS}(G^t)$, we need to evaluate all possible strategy combinations across $n$ players with $t \ge 1$ strategies each. Evaluating expected payoffs for each player across $t^n$ combinations requires a significant number of episodes to be simulated and it becomes increasingly costly as there are more players and more strategies. Instead, we turn to approximation and make use of a neural network with a vector-valued output $G(a) \gets \psi_w(\nu^{a_1}_1, \dots, \nu^{a_n}_n) \in \bR^n$ to approximate players' expected payoffs in a strategy assignment $a = (a_1, \dots, a_n)$, parameterised by $w$. Specifically, we took inspiration from \cite{liu2022neupl} and optimise $\psi_w(\nu^{a_1}_1, \dots, \nu^{a_n}_n)$ using the same value learning objective as in the underlying RL algorithm. Note that because $\psi_w$ is not conditioned on states, the expected returns is marginalised over state visitation distributions in episodes simulated with this specific strategy assignment. Further, in the same way that symmetry can be exploited in co-player joint strategy representation, order-invariant representations can be built into the payoff estimation network too which we exploit. Lastly, we note that the strategy assignment of {\em all} players should be treated as privileged information to the RL policies and value functions. Indeed, the metagame actions recommended to other players are private under the definition of a CCE (see Section~\ref{sec:cce}) and players should only be aware of the joint strategy profile which is public information to all players. This makes \neupljpsro a decentralised algorithm at test time.

For CCE solving, we make use of equilibrium solvers available from the open-source \href{https://github.com/deepmind/open_spiel/blob/master/open_spiel/python/examples/jpsro.py}{JPSRO implementation} of OpenSpiel \citep{LanctotEtAl2019OpenSpiel}. Throughout our experiments, we used the Max-Gini $\epsilon$-{\sc CCE} solver with $\epsilon = 0.01$ that we found necessary to ensure the numerical stability of the solver. It's worth noting that although CCE-solving is computationally tractable and can be solved using off-the-shelf LP solvers, we have found it to be a bottleneck when it comes to games with many players and many strategies. Recent works leveraging hardware acceleration and batched computation could be interesting to explore in this context \citep{marris2022turbocharging}.
To determine if a best-response iteration should terminate, we compare the expected value of the current best-response policies $\Pi_\phi(\cdot | s, \sigma^{t-1}_\notp)$ to the expected payoff estimate from $\sum_{a_\notp \sim \sigma^{t-1}_\notp} \Big[ \psi_w(\nu^{a_1}_1, \dots, \nu^t_p, \dots, \nu^{a_n}_n) \Big]$ against the same co-player mixed-strategy and verify that it is less than $\epsilon = 1\mathrm{e}{-3}$.

\section{Results}
\label{app:results}

We provide additional experimental details in this section. 

\paragraph{Optimisation configurations}
For best-response learning, we used the Maximum a Posteriori Policy Optimisation (MPO, \cite{abdolmaleki2018maximum}) algorithm with retrace \citep{munos2016safe} off-policy correction. For effective exploration and for empirical convergence (as discussed in Section~\ref{sec:scaling}), we maximise entropy as an auxiliary loss, following a decreasing linear schedule at the beginning of each best-response iteration. We set the M-step MPO epsilon to 0.01, E-step MPO epsilon\footnote{For continuous control, we used decoupled KL with an expected co-variance KL divergence of 1e-5.} to 5e-3 and the target network parameters are updated once every 200 gradient steps. These settings promote conservative gradient updates to the RL policies. We collect trajectory snippets of fixed length and make use of a replay buffer for off-policy learning. We compute and apply gradient updates from fixed sized batches of trajectories. For {\tt OpenSpiel/dm\_control/MeltingPot} domains, we used a trajectory length of 16/16/32, a batch size of 1024/2048/2048 and a replay buffer holding up to 2e5/1e6/1e6 trajectories respectively. We note that \neupljpsro is agnostic to the choice of deep RL algorithm and competitive alternatives exist in the literature. 
For optimisation, we used an ADAM optimiser \citep{KingmaB14} with a learning rate of 2e-4. Gradients are clipped by a maximum global norm of 10.0 for each update.

\paragraph{Observation encoder architecture}
In {\tt OpenSpiel} and {\tt dm\_control}, feature vectors are flattened and concatenated without additional processing. In {\tt MeltingPot}, a 2-layer convolutional neural network with output channels (16, 32), kernel shapes ((8, 8), (4, 4)) and strides (8, 1) is used, followed by a 2-layer MLP encoder of size (512, 512) whose outputs are concatenated with additional feature vectors from the game (e.g. flag status indicator). In all these domains, feature representation vectors are then encoded by a 3-layer MLP network of size (512, 256, 128). The memory architecture is feed-forward in all test domains other than {\tt MeltingPot} where players made use of a single-layer LSTM memory network with 256 hidden units to infer other players behaviours from their observation history of the game. Using the state representation $s$ from the memory network, we optimise the best-response policy head $\Pi_\phi(\cdot | s, \sigma_\notp)$ and the neural population of policies $\Pi_\theta(\cdot | s, \nu)$. FiLM multiplicative conditioning architecture \citep{perez2018film} is used to conditionally represent different policies in both cases. Equation~\ref{eq:marginal_encoder} and strategy embedding vectors $\cV$ are used to transform the state representation $s$ for $\Pi_\phi$ and $\Pi_\theta$ respectively.

\paragraph{Compute resources}
For {\tt OpenSpiel/dm\_control/MeltingPot} domains, we used 1 $\times$ V100 GPU/2 $\times$ V100 GPUs/1 $\times$ 8-core TPU-v4 to compute the gradient updates for all policies for all players respectively. For all domains, we used 384 independent CPU processes to simulate episodes and we used an additional accelerator to batch-process all policy inference requests.

\subsection{OpenSpiel}
\label{app:full_openspiel}

\begin{table}
    \centering
    \caption{OpenSpiel \citep{LanctotEtAl2019OpenSpiel} games investigated in this work where convergence to a CCE can be evaluated exactly.}
    \label{tab:games}
    \vskip 0.15in
    \begin{small}
    \begin{tabular}{|c|c|c|c|c|}
     \hline
     {\bf Game} & {\bf Players} & {\bf Payoffs} & {\bf Symmetry} & {\bf OpenSpiel Settings} \\
     \hline
     goofspiel\_2p\_5c & 2 & zero-sum & \cmark & 
\begin{lstlisting}
turn_based_simultaneous_game(
    game=goofspiel(
        egocentric=True, 
        imp_info=True, 
        num_cards=5, 
        num_turns=-1, 
        players=2, 
        points_order=descending, 
        returns_type=point_difference))
\end{lstlisting} \\
     \hline
     kuhn\_poker\_2p & 2 & zero-sum & \xmark & {\tt kuhn\_poker(players=2)} \\
     \hline
     kuhn\_poker\_3p & 3 & zero-sum & \xmark  & {\tt kuhn\_poker(players=3)} \\
     \hline
     leduc\_poker\_2p & 2 & zero-sum & \xmark  & {\tt leduc\_poker(players=2)} \\
     \hline
     sheriff\_2p & 2 & general-sum & \xmark  &
\begin{lstlisting}
sheriff(item_penalty=1.0, 
        item_value=5.0, 
        max_bribe=2, 
        max_items=10, 
        num_rounds=2, 
        sheriff_penalty=1.0)
\end{lstlisting} \\
     \hline
     trade\_comm\_2p\_3i & 2 & common & \xmark & {\tt trade\_comm(num\_items=3)} \\
     \hline
    \end{tabular}
    \end{small}
    \vskip -0.1in
\end{table}

We now provide additional details on our empirical results in OpenSpiel domains. Table~\ref{tab:games} summarises different properties of these games, including the number of players, the structure of the payoffs and if players take on symmetric roles in the game. Specific game settings are also provided and can be used to instantiate instances of the game using the open-source OpenSpiel loader \citep{LanctotEtAl2019OpenSpiel}. We note that our settings for ``sheriff'' is identical to that studied in \cite{farina2019correlation} which offers analytical results on the value of the maximum-welfare CCE in this game. For ``goofspiel'', our setting deviates from \cite{liu2022simplex} in two ways: 1) we used the standard game transformation from simultaneous game to its turn-based counterpart such that it offers the same interface as all other games; 2) we enabled ``egocentric'' observation such that players' observations are unaffected by its identity (i.e. player 1's actions are encoded as actions of the {\em other player} from the perspective of player 2). For all other games, we adopted identical settings as \cite{marris2021jpsroicml} for consistency.

Figure~\ref{fig:openspiel_per_player_cce} provides a detailed breakdown of CCE gaps and CCE values achieved by each player in each game, averaged over 5 seeds. In all games, we have observed empirical convergence to a CCE of the game for each player. Inspecting the CCE value of each player at convergence can reveal interests properties of the game. For instance, we have recovered the advantage to the last mover when players play their equilibrium strategies \citep{szafron2013parameterized, southey2005bayes, kuhn1950simplified}.

\begin{figure}
    \centering
    \includegraphics[width=\textwidth]{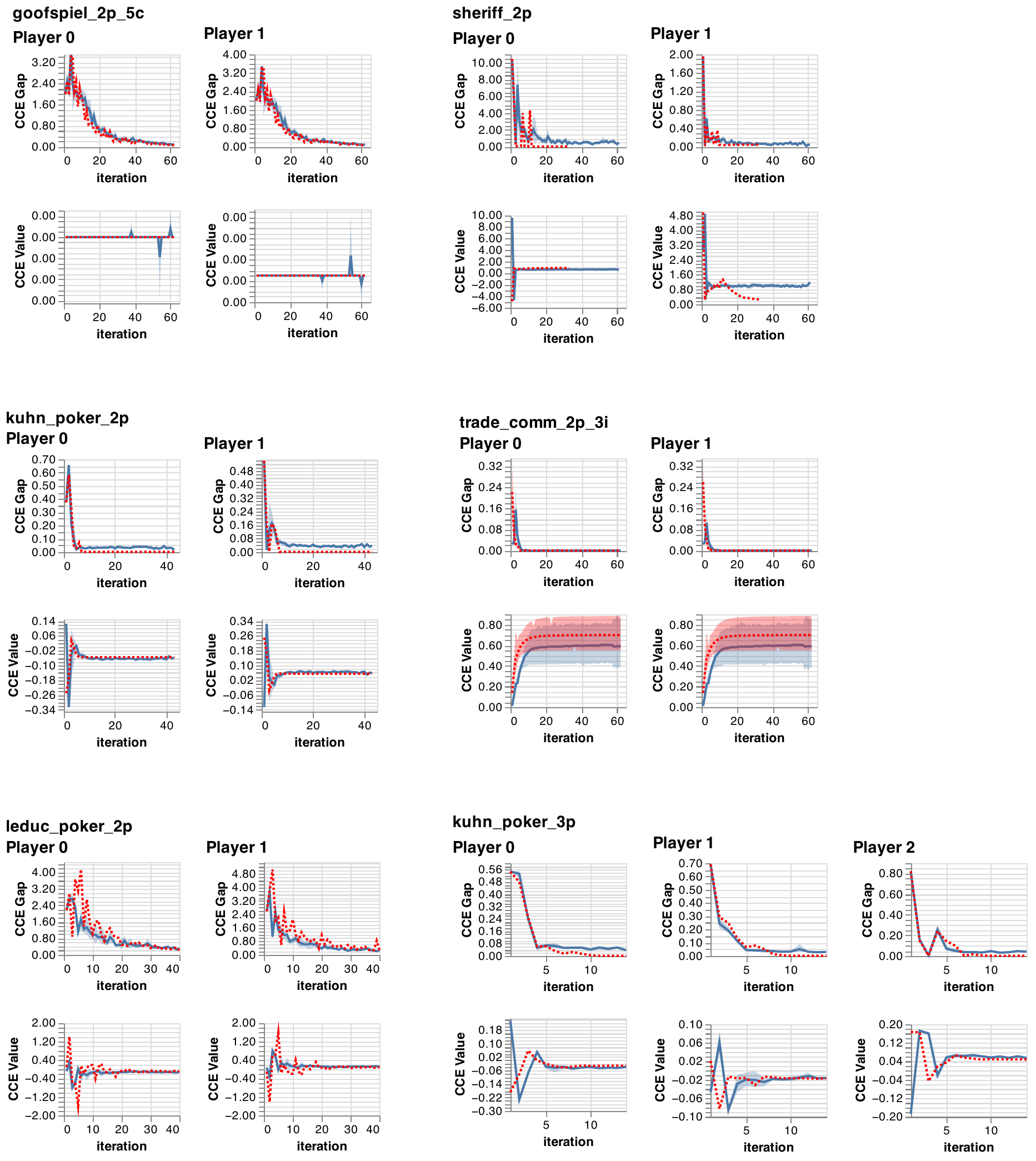}
    \caption{Exact CCE gaps and CCE values for each player averaged over 5 independent trials.\label{fig:openspiel_per_player_cce}}
\end{figure}

Figure~\ref{fig:fig_per_seed_012}-\ref{fig:fig_per_seed_345} provide a detailed view of the evolution of the sets of strategies represented by $\Pi_\theta$ over time. Each line shows the training progression of one best-responding policy. Each iteration starts with a dashed portion indicating that the best-response policy is still improving against its co-players and becomes solid once the best-responding iteration terminates and the next iteration begins. Brighter colours correspond to later iterations. The CCE gap and CCE value of iteration are computed analytically and monitored continuously in an experiment. We observe that in all games across all seeds, the neural network is capable of conditionally representing a large number of strategies for each player and maintain their behaviours over time reliably in most cases. In a few cases, we have observed that policies may change over time. This is reminiscent of the phenomenon of catastrophic forgetting observed in the continual learning literature and we leave to future works to investigate alternative continual learning techniques in this setting.

\begin{figure}
    \centering
    \includegraphics[width=0.9\textwidth,height=\textheight,keepaspectratio]{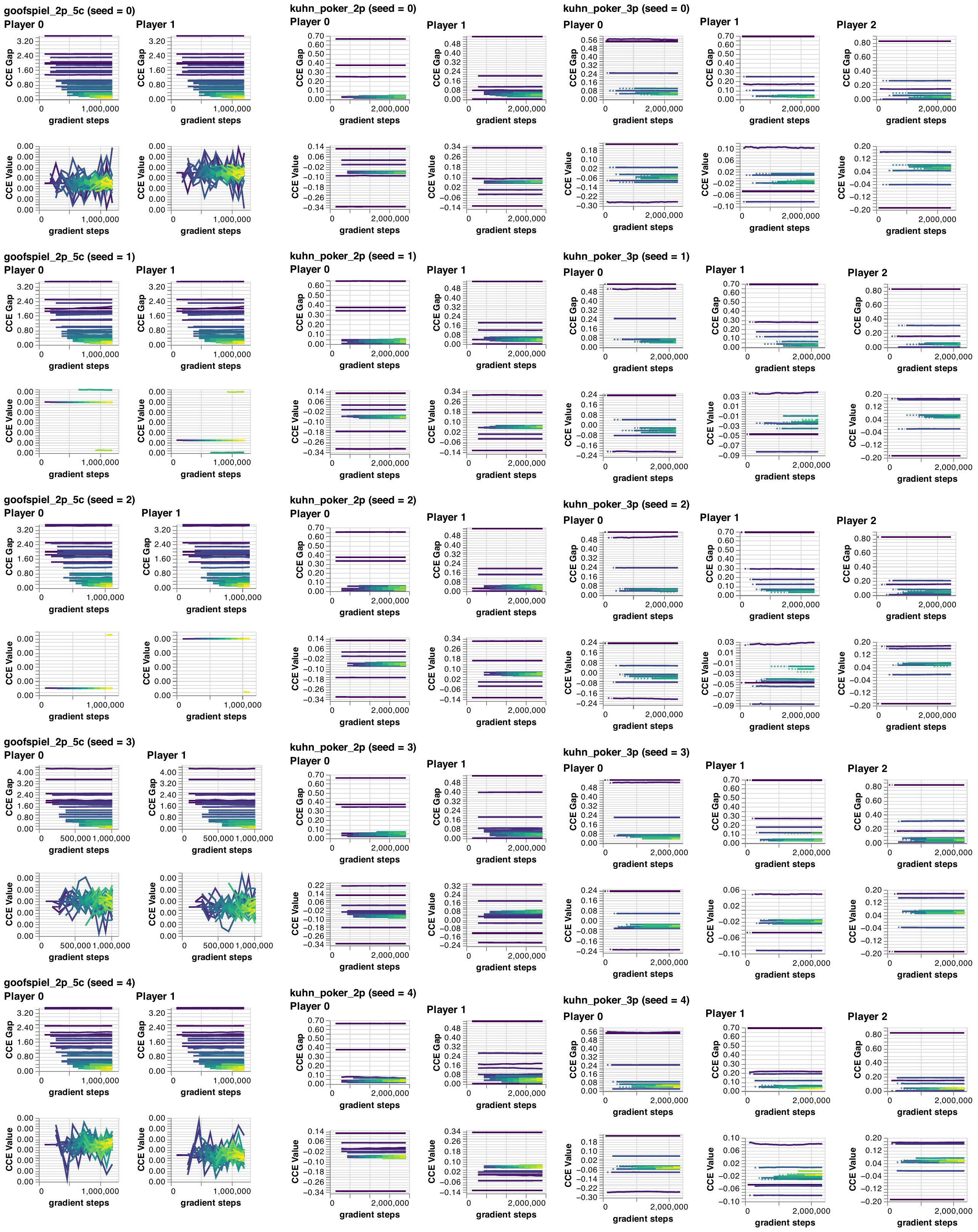}
    \caption{Emergence of strategies for each player in games. Each line corresponds to one policy represented in the neural population of policies. Brighter colours correspond to strategies discovered at later iterations.\label{fig:fig_per_seed_012}}
\end{figure}

\begin{figure}
    \centering
    \includegraphics[width=\textwidth,height=0.94\textheight,keepaspectratio]{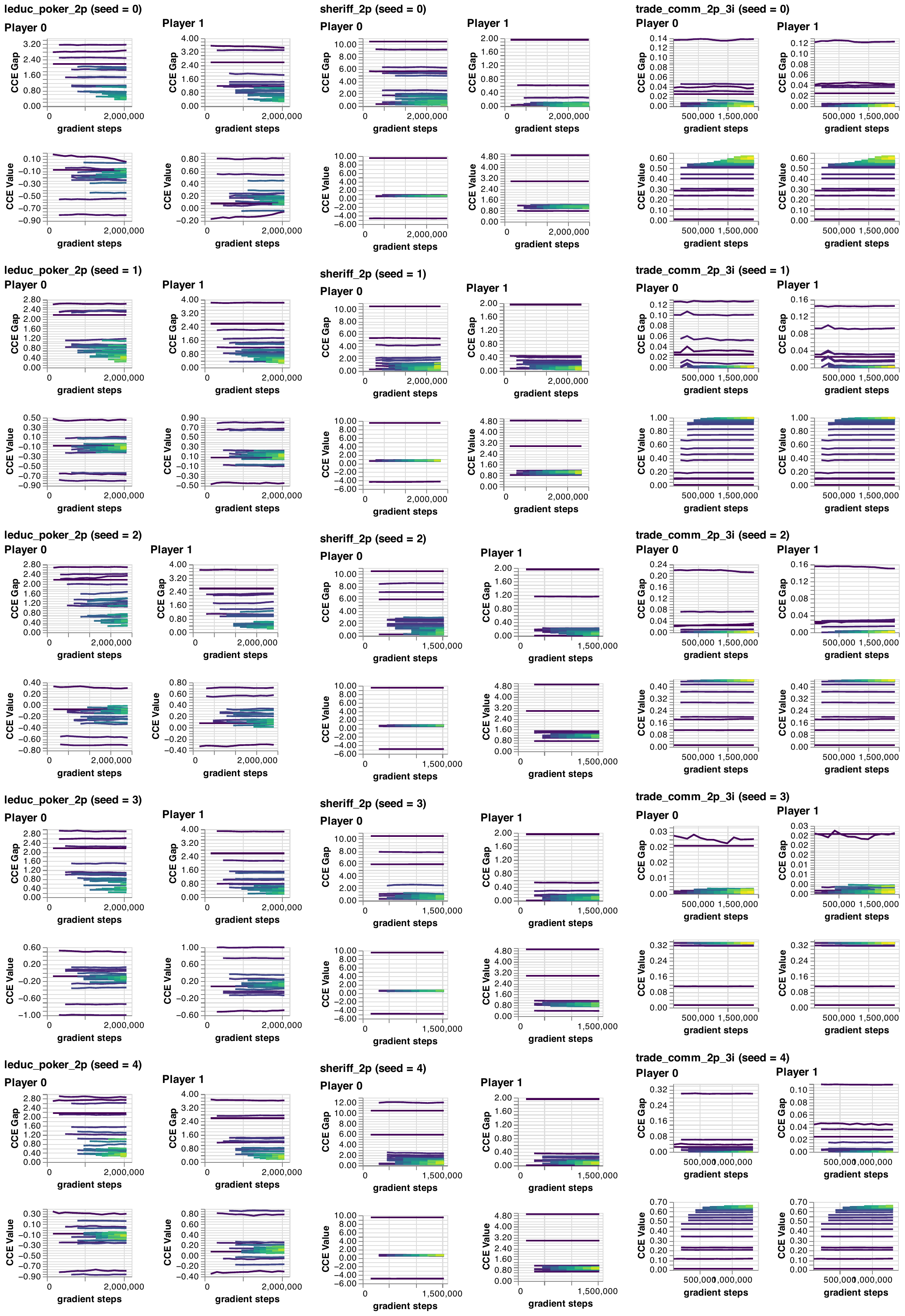}
    \caption{Emergence of strategies for each player in games. Each line corresponds to one policy represented in the neural population of policies. Brighter colours correspond to strategies discovered at later iterations (continued).\label{fig:fig_per_seed_345}}
\end{figure}

\subsection{Cheetah-run}

In our experiment, we used the standard continuous control benchmark task ``cheetah-run'' from the \href{github.com/deepmind/dm_control}{{\tt dm\_control}} task suite. To transform a single-agent control environment into a two-player common-payoff game, we implemented an environment wrapper that assigns each control actuator to one of the two players. A ``cheetah'' body has 6 joint actuators controlling different joints on the front and rear leg of the physical embodiment. Both players are rewarded with the same original task reward that provides a dense reward of 1.0 for each timestep when the forward velocity exceeds a predefined threshold. Finally, both players have access to the same set of observations, namely the positions and velocities of each of its joints. See \cite{tassa2020dm_control} for further details on the physical characteristics of this environment.

\subsection{Capture-the-Flag}
\label{app:ctf_results}

Figure~\ref{fig:ctf_payoffs} visualises the learning progression of \neupljpsro in 4-player {\tt capture-the-flag}. We observe that combinations of policies discovered at later iterations compete favorably against those of earlier iterations and receive higher payoffs. Figure~\ref{fig:ctf_cce_gap_average} additionally shows empirical convergence to a CCE of the game by observing the diminishing gain by switching to independently optimised best-response policies (Blue and Orange) compared to what the neural population already implements (Red). Comparing best-response policies learned {\em with} initial parameter transfer (Orange) to the ones {\em without} (Blue) reveals the benefits of transfer learning across strategies. The difference is more pronounced when best-responding to more sophisticated co-player policies.

\begin{figure}
    \centering
    \includegraphics[width=\textwidth]{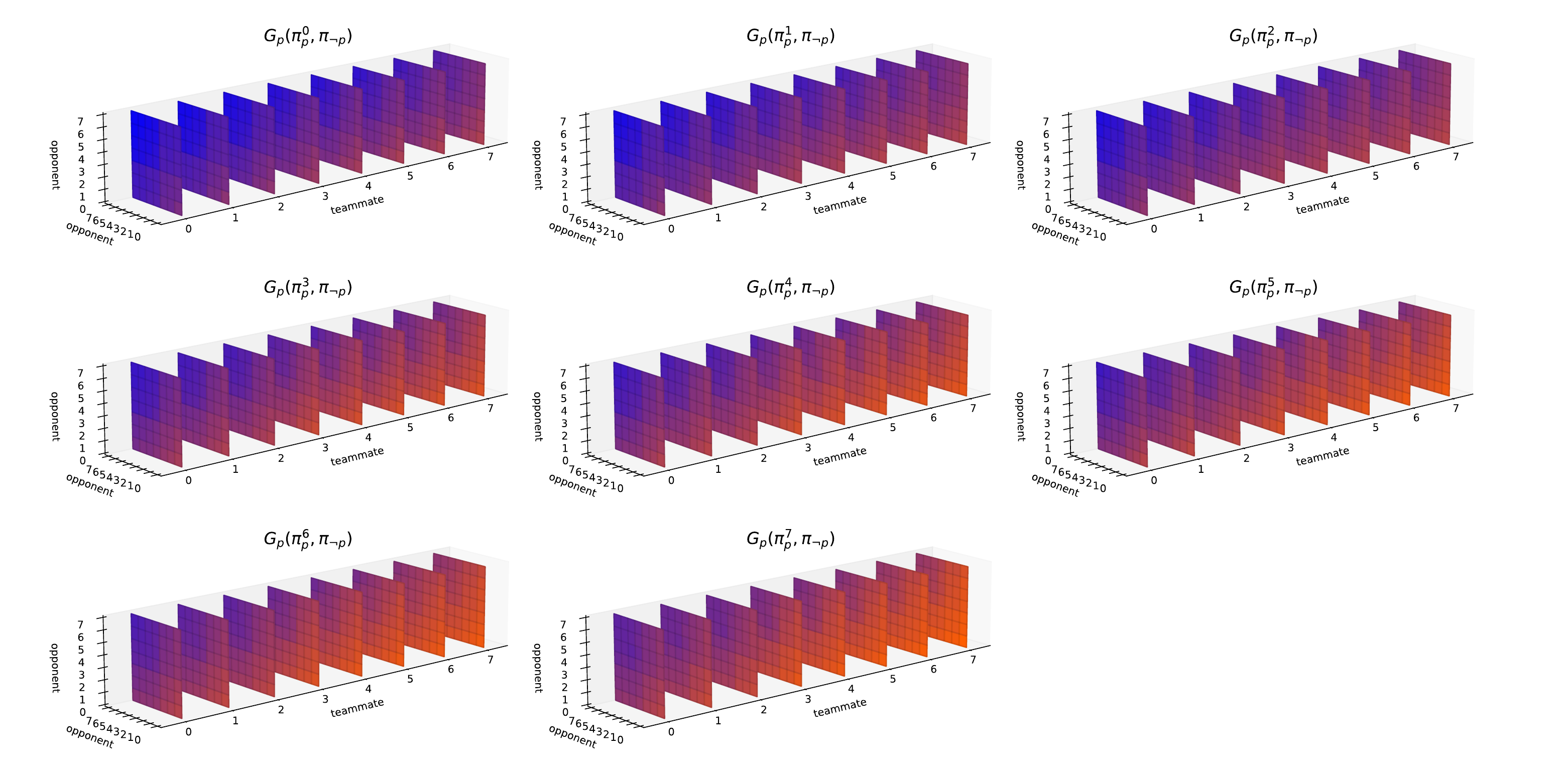}
    \caption{Expected returns for each policy of player $p$ in the presence of co-player policies. Orange (purple) indicates higher (lower) value.\label{fig:ctf_payoffs}}
\end{figure}

\begin{figure}
    \centering
    \includegraphics[width=\textwidth]{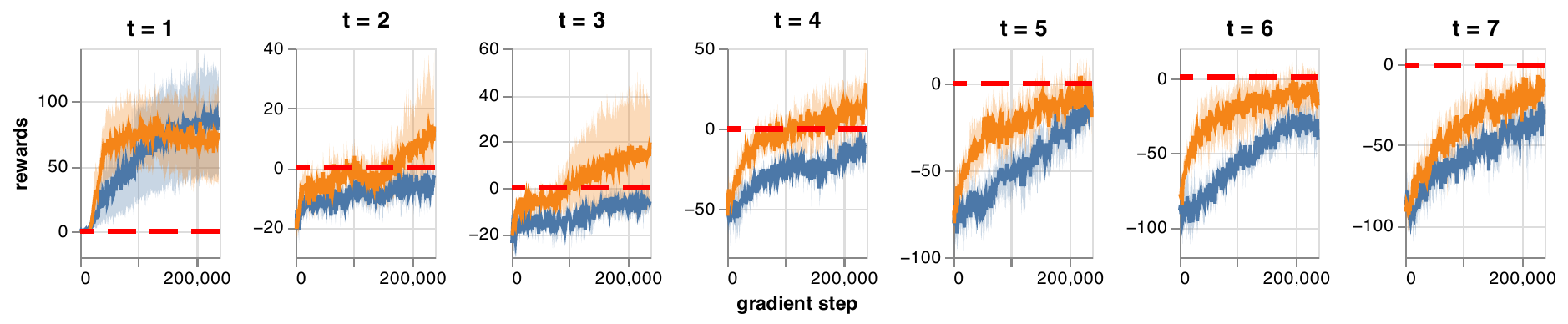}
    \caption{Empirical convergence to a CCE of the game as evidenced by the diminishing improvement offered by deviating to independently trained best-response policies. {\bf (Blue)} Learning progression of an independent RL policy for player $p$ when optimised against marginal CCE distributions at each iteration $\sigma^t_\notp$. {\bf (Orange)} Same as in Blue but the network parameters are partially initialised with that of the conditional policy $\Pi_\theta$. {\bf (Red)} Visualises the CCE value of player $p$ at each iteration when executing $\Pi_\theta$. Solid lines show the {\em average} rewards across 6 independent trials.\label{fig:ctf_cce_gap_average}}
\end{figure}

\end{document}